\documentclass[final,1p,times]{elsarticle}

\usepackage{amssymb}
\usepackage{amsmath}
\usepackage{amsthm}

\usepackage{latexsym}
\usepackage{booktabs}
\usepackage{enumitem}
\usepackage{graphicx}
\usepackage{color}
\usepackage{subcaption}
\usepackage{multirow}
\usepackage{dsfont}
\usepackage{makecell}
\usepackage{indentfirst}
\usepackage{placeins}
\usepackage{stmaryrd}

\usepackage{xcolor}
\definecolor{C0}{HTML}{3182ce}
\usepackage[breaklinks=true,colorlinks,bookmarks=false,citecolor=C0]{hyperref}

\DeclareMathOperator{\relu}{ReLU}
\DeclareMathOperator{\rank}{rank}

\newtheorem{theorem}{Theorem}
\newtheorem{lemma}[theorem]{Lemma}

\newtheorem{definition}{Definition}

\newcommand{\BibTeX}{B\kern-.05em{\sc i\kern-.025em b}\kern-.08em\TeX}

\begin{document}
% \setreviewson

\begin{frontmatter}

  %% Title, authors and addresses

  %% use the tnoteref command within \title for footnotes;
  %% use the tnotetext command for theassociated footnote;
  %% use the fnref command within \author or \affiliation for footnotes;
  %% use the fntext command for theassociated footnote;
  %% use the corref command within \author for corresponding author footnotes;
  %% use the cortext command for theassociated footnote;
  %% use the ead command for the email address,
  %% and the form \ead[url] for the home page:
  %% \title{Title\tnoteref{label1}}
  %% \tnotetext[label1]{}
  %% \author{Name\corref{cor1}\fnref{label2}}
  %% \ead{email address}
  %% \ead[url]{home page}
  %% \fntext[label2]{}
  %% \cortext[cor1]{}
  %% \affiliation{organization={},
  %%             addressline={},
  %%             city={},
  %%             postcode={},
  %%             state={},
  %%             country={}}
  %% \fntext[label3]{}

  \title{Bounding Box Anomaly Scoring \\
    for simple and efficient Out-of-Distribution detection}

  %% use optional labels to link authors explicitly to addresses:
  %% \author[label1,label2]{}
  %% \affiliation[label1]{organization={},
  %%             addressline={},
  %%             city={},
  %%             postcode={},
  %%             state={},
  %%             country={}}
  %%
  %% \affiliation[label2]{organization={},
  %%             addressline={},
  %%             city={},
  %%             postcode={},
  %%             state={},
  %%             country={}}

  \author[cea]{Mohamed Bahi Yahiaoui\corref{cor1}}

  \author[ps]{Geoffrey Daniel}

  \author[cea]{Loïc Giraldi}

  \author[cea]{Jérémie Bruyelle}

  \author[uga]{Julyan Arbel}

  \cortext[cor1]{Corresponding author}

  \affiliation[cea]{organization={CEA, DES, IRESNE, DEC, SESC},
    city={Saint-Paul-lez-Durance},
    postcode={13115},
    country={France}}

  \affiliation[ps]{organization={Université Paris-Saclay, CEA, SGLS},
    city={Gif-sur-Yvette},
    postcode={91190},
    country={France}}

  \affiliation[uga]{organization={Université Grenoble Alpes, Inria, CNRS, Grenoble INP, LJK},
    city={Grenoble},
    postcode={38000},
    country={France}}

  %% Abstract
  \begin{abstract}
    %   \small
    Out-of-distribution (OOD) detection aims to identify inputs that differ from the training distribution in order to reduce unreliable predictions by deep neural networks. Among post-hoc feature-space approaches, OOD detection is commonly performed by approximating the in-distribution support in the representation space of a pretrained network. Existing methods often reflect a trade-off between compact parametric models, such as Mahalanobis-based scores, and more flexible but reference-based methods, such as k-nearest neighbors. Bounding-box abstraction provides an attractive intermediate perspective by representing in-distribution support through compact axis-aligned summaries of hidden activations. In this paper, we introduce Bounding Box Anomaly Scoring (BBAS), a post-hoc OOD detection method that leverages bounding-box abstraction. BBAS combines graded anomaly scores based on interval exceedances, monitoring variables adapted to convolutional layers, and decoupled clustering and box construction for richer and multi-layer representations. Experiments on image-classification benchmarks show that BBAS provides robust separation between in-distribution and out-of-distribution samples while preserving the simplicity, compactness, and updateability of the bounding-box approach.
    %   \normalsize
  \end{abstract}

  %%Graphical abstract
  %   \begin{graphicalabstract}
  %     %\includegraphics{grabs}
  %   \end{graphicalabstract}

  %%Research highlights

  %% Keywords
  \begin{keyword}
    %% keywords here, in the form: keyword \sep keyword
    out-of-distribution detection \sep deep neural networks \sep representation learning \sep feature space monitoring \sep bounding box abstraction %\sep post-hoc detection
  \end{keyword}

\end{frontmatter}

%% Add \usepackage{lineno} before \begin{document} and uncomment 
%% following line to enable line numbers
%% \linenumbers

%% main text
%%

%% Use \section commands to start a section
\section{Introduction}

\label{sec:introduction}

Over the past decade, deep learning has become the dominant paradigm in machine learning, with especially strong impact in computer vision and natural language processing \citep{lecun2015deep,vaswani2017attention}. As these data-driven systems transition from benchmarks to deployment in decision and safety-critical settings, ranging from healthcare and industrial control to energy infrastructure, the need for trustworthy artificial intelligence becomes increasingly salient~\citep{perez2024artificial,kaur2022trustworthy}.

%A central capability in this context is out-of-distribution (OOD) detection, which aims to identify inputs that deviate from the training distribution and for which model predictions may be unreliable.

One unreliable behavior of deep learning models is that they always produce predictions, possibly with high-confidence, even when presented with unfamiliar or nonsensical inputs. Out-of-distribution (OOD) detection aims to identify samples that differ substantially from the data distribution used to train the model. Such inputs may arise from noise, distribution shift, or novel contexts not observed during training, forcing the model to extrapolate and potentially yielding unreliable predictions. Consequently, developing effective OOD detection methods has become an important research focus in the deep learning community~\citep{miyaigeneralized,lu2025out}.

Previous works on OOD detection can be broadly grouped into two categories. The first category modifies training to encourage OOD awareness, for instance through auxiliary objectives or uncertainty-aware formulations~\citep{devries2018learning,hendrycks2019selfsupervised,nalisnick2019generative,gal2016dropout,malinin2018prior,laurent2023packed}. The second category comprises post-hoc methods, which attach a scoring rule to an already trained model, using its outputs (e.g., softmax scores, logits, energies) or internal representations, to separate in-distribution (InD) from OOD inputs without retraining or architectural changes. Owing to their practicality, post-hoc detectors have attracted substantial attention, yielding a rich set of scoring functions. Early approaches derived OOD scores from softmax probabilities or logits~\citep{hendrycks2017baseline,liang2018enhancing,liu2020energy}. More recent methods further improve separability by applying test-time transformations or calibration and by adjusting intermediate activations or parameters to mitigate overconfident predictions on OOD inputs~\citep{Wang_2022_CVPR,djurisic2023extremely,Liu2023GEN,xu2024scaling}.

Among post-hoc approaches, a promising family of methods relies on distances or similarity metrics computed in the feature space of a network relative to in-distribution training data~\citep{lee2018simple,zhang2023outofdistribution,sun2022nearest}. The key hypothesis is that InD inputs concentrate in a “typical” region in the representation space, whereas OOD inputs tend to fall farther from it. Accordingly, these methods approximate the InD support in feature space, or at least a high-density region of that support, and assign higher OOD scores to test samples whose representations deviate from this approximation. They are generally model-agnostic and often transfer across tasks and architectures because they operate on intermediate activations rather than task-specific outputs. A central design choice in feature-space OOD detection is how the in-distribution support is approximated: compact parametric models can be efficient but restrictive, whereas highly local reference-based methods can be expressive but less scalable. Bounding-box abstraction provides an appealing intermediate form of geometric support representation through compact summaries of internal activations.

Bounding-box abstraction~\citep{Henzinger}, originally introduced for runtime monitoring and novelty detection, summarizes in-distribution data using compact axis-aligned hyperrectangles in a neural network representation space. These hyperrectangles approximate the support of the training data and can be used to determine whether a test sample internal activations remain within the learned region. As such, bounding-box abstraction fits naturally within post-hoc feature-space OOD detection. While attractive for its interpretability and efficiency, the original instantiation monitors only a single layer and yields only a hard accept/reject decision. Moreover, it couples clustering and box construction in the same representation, which can restrict the monitor to a narrow subset of layers or features.

\paragraph{Contributions}
In this work, we propose Bounding Box Anomaly Scoring (BBAS), a distance-based OOD detection approach that leverages bounding box abstraction to detect OOD data. BBAS quantifies deviations in internal network states between test inputs and training data via an anomaly score, leveraging the geometric structure induced by neuron activations. In our evaluation, BBAS achieves strong OOD detection performance across multiple benchmarks and architectures.
The main contributions are:
\begin{itemize}[nolistsep, leftmargin=.5cm]
  \item We extend bounding-box-based OOD monitoring by incorporating convolutional layer representations in addition to the final fully-connected layer.
  \item We introduce a decoupled clustering and abstraction scheme: clustering is performed on a selected subset of features, while bounding boxes are constructed in a richer representation. This enables multi-layer monitoring while mitigating high-dimensional clustering issues.
  \item We propose an OOD scoring mechanism leveraging neuron exceedance, enabling graded novelty assessment and calibration of detector sensitivity.
\end{itemize}

\paragraph{Outline} This paper is structured as follows. Section~\ref{sec:problem} formalises OOD detection and introduces the scoring-based evaluation setting. Section~\ref{sec:related} reviews distance-based post-hoc detectors and situates our approach with respect to bounding box abstraction, including a detailed comparison and discussion of the design choices addressed by our method. Section~\ref{sec:bbas} presents BBAS and provides an interpretation of the abstraction. Section~\ref{sec:experiments} reports an empirical study of the design choices and compares BBAS with prior work on the unified OpenOOD benchmark~\citep{Zhang2023OpenOODVE}. Finally, Section~\ref{sec:discussion} discusses the flexibility and limitations of the approach.

\section{Out-of-Distribution problem setting}
\label{sec:problem}

This paper examines OOD detection challenges mainly in the context of multiclass classification. Let \( \mathcal{X} \) denote the input space, and \( \mathcal{Y} \) denote the label space, where \( \mathcal{Y} = \{1, 2, \dots, K\} \) for a classification task with \( K \) classes. The training dataset \( \mathcal{D} = \{(x_i, y_i)\}_{i=1}^n \) consists of $n$ i.i.d. samples drawn from the training distribution.  A neural network $\mathcal{N}_{\theta}: \mathcal{X} \to \mathbb{R}^{K}$ parameterised by \( \theta \) is trained on samples from \( \mathcal{D} \) to produce a logit vector as output for classification tasks.

OOD detection is a critical task in machine learning, aiming to identify whether a given input sample belongs to the distribution on which a model was trained or deviates from it. This problem is framed as a binary classification task, where each input \(x \in \mathcal{X}\) is classified as either in-distribution (InD) or out-of-distribution (OOD).

To effectively perform this classification, a scoring function \(S(x)\) can be employed as an intermediate step. The scoring function assigns a real-valued score to each sample, reflecting the likelihood of \(x\) being out-of-distribution. By establishing a threshold \(\tau\), a decision function \(\hat{g}(x)\) can then be defined as:
\[
  \hat{g}(x) =
  \begin{cases}
    1 & \text{meaning } x \sim \textnormal{OOD}, \text{if } S(x) > \tau,    \\
    0 & \text{meaning } x \sim \textnormal{InD}, \text{if } S(x) \leq \tau.
  \end{cases}
\]
Using an intermediate scoring function $S$ facilitates the comparison and benchmarking of multiple OOD detection approaches~\citep{yang2022openood, Zhang2023OpenOODVE} by employing standardised evaluation metrics, such as the Area Under the Receiver Operating Characteristic Curve (AUROC) and the False Positive Rate at 95\% True Positive Rate (FPR95).

\section{Related work}
\label{sec:related}

OOD detection has become a central concern in the deployment of machine learning models, particularly in safety-critical or open-world environments. A common strategy is to attach a post hoc OOD detector to a pretrained network, avoiding retraining or architectural modifications. Many such detectors are distance-based. They approximate the InD data support in a feature space induced by the pretrained network, typically using activations from the penultimate layer. They then assign an OOD score by measuring how far a test sample lies from this support approximation. The quality of the approximation directly affects sensitivity and robustness. Existing families of methods mainly differ in the form of the support approximation they induce in the feature space, from compact parametric representations to highly local reference-based ones, with corresponding implications for detection performance, robustness, and scalability.

\paragraph{Mahalanobis-based detectors}
The Mahalanobis-distance method introduced by~\citet{lee2018simple} approximates class-conditional feature distributions using class-specific means and a shared covariance matrix estimated from training data at a chosen hidden layer. This defines an InD support approximation via ellipsoidal level sets of equal Mahalanobis distance around each class mean. At test time, a sample is passed through the network to extract its feature representation, and its OOD score is computed as the Mahalanobis distance to the nearest class mean.
The method is appealing for its simplicity and closed-form scoring, and performs well in some settings. However, it relies on class-conditional features being reasonably well-approximated by Gaussian structure with comparable covariance; this assumption may not hold in deep feature spaces~\citep{sun2022nearest}. Correspondingly, prior evaluations report settings in which parametric Mahalanobis-style scores can be brittle or fail to separate certain OOD inputs, including synthetic unit-test cases~\citep{pmlr-v202-bitterwolf23a}. These observations motivate alternatives that impose weaker global geometric assumptions or adapt more locally to the feature distribution.

\paragraph{k-Nearest-Neighbor (kNN) OOD detection}
At the other extreme, kNN-based methods impose minimal global distributional structure. The KNN  approach~\citep{sun2022nearest} approximates the InD support by retaining a set of training feature vectors extracted from a fixed hidden layer. Rather than fitting a parametric model, it induces an acceptance region implicitly through local neighborhoods around these reference points. At test time, a sample feature representation is compared against the reference set, and its OOD score is derived from distances to its nearest neighbors.
This method is attractive for its flexibility and ability to capture fine-grained structure in the feature space. A practical limitation is that the approximation is inherently reference-based: it does not reduce to a fixed-size summary independent on the dataset, and updates to the reference distribution typically require maintaining an index over stored features~\citep{qin2019scalable}. These properties can complicate deployment, scalability and lifecycle maintenance as the reference set evolves.

\paragraph{Bounding box abstraction}
Bounding box abstractions~\cite{Henzinger} provide a middle ground between rigid parametric summaries and instance-based representations by adapting interval abstraction~\cite{CousotCousot76-1} to runtime monitoring. The monitor builds class-wise summaries of training representations by fitting a bounding box (i.e., an axis-aligned hyperrectangle) to the activations of a chosen fully connected layer. To avoid a single loose box, it splits the data by class ($D_k$), runs k-means within each class, and fits one box per cluster. At inference, it computes the test activation at the monitored layer and checks whether it lies in any box for the predicted class; if not, it flags novelty. This is a geometric test in representation space, similar in spirit to distance-based scores but using box membership instead of a distance score.

This framework is appealing due to its conceptual simplicity, compact representation, flexibility, and practical updateability~\citep{kueffner2023into, wu2023customizable}. At the same time, the original method is limited by (i) clustering in high-dimensional feature spaces, which restricts the number and type of monitored variables, (ii) a focus on a single fully connected layer, and (iii) a hard accept/reject decision that complicate quantitative comparison with other OOD detection techniques. In the next section, we address these limitations by introducing Bounding Box Anomaly Scoring (BBAS), which defines convolution-adapted monitoring variables, decouples the features used for clustering from those used for box construction to address the restriction due the high dimension, and derives OOD scores from interval exceedances.

% \cite{Henzinger} relies on two assumptions:
% \paragraph{Assumption 1} Class-Dependent Structured Internal Representations
% \paragraph{Assumption 2} OOD Data Deviate from In-Distribution Feature Structure

% Both assumptions are not controversial, as deep neural networks tend to learn structured, class-dependent feature representations in their hidden layers.[expand here]

\section{Bounding Box Anomaly Scoring}
\label{sec:bbas}

%We detect OOD inputs by approximating the InD support in a representation space with class-conditional axis-aligned bounding boxes. At test time, inputs can be scored with regard to the boxes (Section~\ref{sec:anomaly_score}).

\subsection{Monitoring variables}
\label{sec:monitorvars}

Let $\phi:\mathbb{R}^{n_{\textnormal{in}}}\to\mathbb{R}^{N_{\textnormal{var}}}$ be a feature map with an input space of dimension $n_{\textnormal{in}}$ and $N_{\textnormal{var}}$ monitored variables.
Given a set of training points $D$, we compute a bounding box $\mathcal{B}(D,\phi)$ in the feature space by taking a per-coordinate minimum and maximum over $\phi$:

\begin{align}
  \label{eq:BB_def}
  \mathcal{B}(D,\phi)
  := \prod_{j=1}^{N_{\textnormal{var}}}
  \Big[\min_{x\in D}\phi_j(x),\; \max_{x\in D}\phi_j(x)\Big].
\end{align}

This defines an accepted region $\mathcal{R}(D,\phi)$ of inputs whose monitored features remain inside the box:
\begin{align}
  \label{eq:AR_def}
  \mathcal{R}(D,\phi)
  := \{x\in\mathbb{R}^{n_{\textnormal{in}}}\mid \phi(x)\in \mathcal{B}(D,\phi)\}.
\end{align}

The abstraction relies on an explicit representation $\phi(x)$ whose coordinates can be bounded to define a bounding box in the feature space. In fully-connected networks, the considered features are the pre-activation values of the neurons in the intermediate layers.
In CNNs, a convolutional layer can be viewed as the repeated application of a shared linear map across spatially overlapping input patches, producing channel-wise feature responses over an ensemble of local receptive fields. Directly monitoring all spatial activations is high-dimensional and overly sensitive to translation. We therefore construct monitoring variables that summarize channel behavior across spatial locations.

Let \(x \in \mathbb{R}^{n_\text{in}}\) be an input to a CNN, and let \( \mathbf{A}^{(1)}, \mathbf{A}^{(2)}, \dots, \mathbf{A}^{(L)} \) denote the preactivation tensors of the \(L\) convolutional layers that we monitor.
For a convolutional layer \( \ell \in \{1,\dots,L\} \), we write
\[
  \mathbf{A}^{(\ell)} = [\mathbf{A}_1^{(\ell)}, \mathbf{A}_2^{(\ell)}, \dots, \mathbf{A}_{C_\ell}^{(\ell)}]^\top
  \in \mathbb{R}^{C_\ell \times H_\ell \times W_\ell},
\]
where \(C_\ell\) is the number of channels and \(H_\ell,W_\ell\) are the spatial dimensions.

For each monitored convolutional layer \(\ell\), we build three types of monitoring variables as vectors in \(\mathbb{R}^{C_\ell}\):
the activation-fraction vector \(\mathbf{f}^{(\ell)}\) in Equation~\eqref{eq:af},
the channel-wise minimum vector \(\mathbf{m}^{(\ell)}\) in Equation~\eqref{eq:min}, and
the channel-wise maximum vector \(\mathbf{M}^{(\ell)}\) in Equation~\eqref{eq:max},
with entries for channel \(c\in\{1,\dots,C_\ell\}\):
\begin{align}
  \label{eq:af}
  f_{c}^{(\ell)} & \;=\;\frac{1}{H_{\ell}W_{\ell}}\sum_{h=1}^{H_{\ell}}\sum_{w=1}^{W_{\ell}}
  \mathds{1}\!\big(A_{c}^{(\ell)}[h,w] > 0\big),                                                \\
  \label{eq:min}
  m_{c}^{(\ell)} & \;=\;\min_{1 \le h \le H_{\ell},\;1 \le w \le W_{\ell}} A_{c}^{(\ell)}[h,w], \\
  \label{eq:max}
  M_{c}^{(\ell)} & \;=\;\max_{1 \le h \le H_{\ell},\;1 \le w \le W_{\ell}} A_{c}^{(\ell)}[h,w].
\end{align}
For $\relu$ networks, the indicator in Equation~\eqref{eq:af} measures the fraction of spatial locations whose preactivation is positive, i.e., the proportion of locations where the channel is active.

In addition to the per-layer channel summaries above, we include a monitoring variable, denoted \(\mathbf{z}\) taken from the penultimate layer of the classifier, i.e., the representation used to compute logits.

We concatenate the per-layer summaries and normalize those components whose scale depends on activation magnitude (min, max, and $\mathbf{z}$). The activation fractions are inherently scaled between 0 and 1:

\begin{align}
  \label{eq:CNN_phi}
  \phi(x) =
  \begin{aligned}[t]
    \Big[\,
     & \mathbf{f}^{(1)},\;
      \frac{\mathbf{m}^{(1)}}{\lVert \mathbf{m}^{(1)} \rVert_{2}+\varepsilon},\;
      \frac{\mathbf{M}^{(1)}}{\lVert \mathbf{M}^{(1)} \rVert_{2}+\varepsilon},\;
    \dots,                 \\
     & \mathbf{f}^{(L)},\;
      \frac{\mathbf{m}^{(L)}}{\lVert \mathbf{m}^{(L)} \rVert_{2}+\varepsilon},\;
      \frac{\mathbf{M}^{(L)}}{\lVert \mathbf{M}^{(L)} \rVert_{2}+\varepsilon},\;
      \frac{\mathbf{z}}{\lVert \mathbf{z} \rVert_{2}+\varepsilon}
      \,\Big]^\top ,
  \end{aligned}
\end{align}
with a small constant \(\varepsilon>0\) (e.g., \(10^{-12}\)) to prevent division by zero.
Normalizing feature vectors improves distance-based OOD detection methods in computer vision tasks, consistent with observations such as in \citet{sun2022nearest}.

\subsection{Clustering for bounding box construction}
\label{seq:clustering}

%Real-world datasets are rarely well represented by a single axis-aligned box. 
For real-world datasets, using a single axis-aligned box rarely represents efficiently the InD support. We therefore partition the training points into clusters and fit one box per cluster.

Clustering is a preprocessing step whose purpose is to partition training data into subsets that are well-approximated by axis-aligned boxes in the monitored space. However, the direct use of the neurons pre-activations, such as the monitoring variables $\phi(x)$, leads to high-dimensional objects whose clustering is complex.
To address this issue, we introduce a separate clustering representation $\psi(x)$ that produces compact clusters.

\paragraph{Fully connected neural networks}
For fully connected neural networks using $\relu$ activation functions, we use the activation patterns of the network instead of the full hidden features of the network. An activation pattern is the binary vector that represents the neurons that are activated in the neural network. We compare activation patterns with the Hamming distance. We provide a detailed interpretation of this approach in Section~\ref{sec:interpretation}.

\paragraph{Adaptation for CNNs}
In the case of CNNs with $\relu$ activation functions, the geometric interpretation of activation patterns is similar to fully connected neural networks. However, as stated in Section~\ref{sec:monitorvars}, the translational symmetry of CNNs leads us to consider other descriptive monitoring variables than the direct value of the neurons. Similarly, we do not use the direct activation of the neurons as clustering features. Instead, we set $\psi(x)$ to be the concatenation of activation-fraction vectors, defined in Equation~\eqref{eq:af}, from the monitored layers and compare points with Manhattan distance.

\paragraph{Clustering algorithm choice}
To obtain clusters that are well matched to axis-aligned boxes, we use hierarchical clustering with complete linkage (from \texttt{scikit-learn} \cite{scikit-learn}).
Complete linkage merges clusters based on their farthest pair of points, which discourages elongated clusters and yields tighter bounding boxes.
In practice, this yields tighter clusters and, consequently, tighter bounding boxes in the monitored space. In Section~\ref{sec:ablation}, we conduct an empirical study on the impact of the choice of the clustering algorithm.

\paragraph{Clustering application}
In order to build the clusters, we first extract the training data point
that belong to the same class $k$ of the classification task and we compute
the clustering features. It is also possible to exclude data points that are
misclassified by the network or classified with a low confidence probability
to eliminate noisy instances that could degrade OOD detection performance.
We obtain therefore a set of clusters $\mathcal{C}_k$ associated to each class~$k$.

%\vspace{0.5cm}

% After clustering, we compute the full set of per-coordinate bounds in $\phi$ within each cluster. For class $k$, let $\mathcal{C}_k$ be the resulting set of clusters (subsets of training points). Each cluster $\mathbf{c}\in\mathcal{C}_k$ yields a box $\mathcal{B}(\mathbf{c},\phi)$.
% The class-conditional accepted region is the union of these boxes:
% \begin{align}
%  \label{eq:AR_clust}
%  \mathcal{R}_k
%  = \bigcup_{\mathbf{c}\in\mathcal{C}_k}\{x:\phi(x)\in \mathcal{B}(\mathbf{c},\phi)\},
% \end{align}
% and the global accepted region is $\mathcal{R}:=\bigcup_{k=1}^K \mathcal{R}_k$.

\subsection{Bounding box computation}

%In order to build the clusters, we first extract the training data point that belong to the same class $k$ of the classification task and we compute the clustering features. It is also possible to exclude data points that are misclassified by the network or classified with a low confidence probability to eliminate noisy instances that could degrade OOD detection performance. We obtain therefore a set of clusters $\mathcal{C}_k$ associated to each class $k$. 

Once the training data are clustered, we compute the monitoring variables $\phi(x)$ represented by the intermediate features associated to every training data point $x$. Then, we define the bounding boxes for each cluster by computing the bounds of the monitoring variables associated to each training data point in the cluster.
Formally, let us consider a cluster $\mathbf{c} \in \mathcal{C}_k$ associated to a class $k$, the function $\phi_{i}^{(\ell )}$ that maps an input $x$ to the corresponding monitoring variable for the neuron or channel $i$ of the layer $\ell $. The lower bound $L_{\mathbf{c},i}^{(\ell )}$ and upper bound $U_{\mathbf{c},i}^{(\ell )}$ are defined by:
\begin{equation}
  \label{eq:bounds}
  L_{\mathbf{c},i}^{(\ell )} = \min_{x \in \mathbf{c}} \phi_{i}^{(\ell )}(x)~\text{ and }~U_{\mathbf{c},i}^{(\ell )} = \max_{x \in \mathbf{c}} \phi_{i}^{(\ell )}(x).
\end{equation}

Consistently with Equation~\eqref{eq:BB_def}, the complete bounding box for cluster $\mathbf{c}$ built upon these bounds:

\begin{align}
  \label{eq:BB_clust}
  \mathcal{B}(\mathbf{c},\phi)
  := \prod_{l=1}^{N_\mathrm{layers}} \prod_{i=1}^{N_l}
  \Big[L_{\mathbf{c},i}^{(\ell )}; U_{\mathbf{c},i}^{(\ell )}\Big],
\end{align}

where $N_\mathrm{layers}$ is the number of monitored layers and $N_l$ is the number of neurons or channels in the layer $l$.

The class-conditional accepted region for each class $k$ is the union of these boxes:
\begin{align}
  \label{eq:AR_clust}
  \mathcal{R}_k
  = \bigcup_{\mathbf{c}\in\mathcal{C}_k}\{x:\phi(x)\in \mathcal{B}(\mathbf{c},\phi)\},
\end{align}
and the global accepted region is $\mathcal{R}:=\bigcup_{k=1}^K \mathcal{R}_k$.

\subsection{Anomaly scoring via exceedance measures}
\label{sec:anomaly_score}

Considering a new input point $x^*$ predicted in class $\hat{k}$ by the neural network, we want to quantify whether this point is out-of-distribution using the class-conditional bounding boxes built on training data. We define anomaly scores based on \emph{exceedances} based on these bounding boxes.

\paragraph{Exceedance count}
We assign an OOD score by computing the number of times neuron states $\phi(x^*)$ exceed the predefined bounds (Equation~\eqref{eq:bounds}) within each cluster and selecting the minimum count across all clusters associated with the predicted class $\hat{k}$, denoted as $\mathcal{C}_{\hat{k}}$.
Formally, the anomaly score $\mathrm{AS\mbox{-}EC}$ for a specific input $x^*$ associated to a predicted class $\hat{k}$ is defined as:

\begin{equation}
  \label{eq:EC}
  \mathrm{AS\mbox{-}EC}(x^*,k) = \min_{\mathbf{c} \in \mathcal{C}_{k}}
  \sum_{\ell  = 1}^{N_\mathrm{layers}} \sum_{i = 1}^{N_{\ell}}
  \mathds{1}\!\Big(\phi_{i}^{(\ell )}(x^*) \notin [L_{\mathbf{c},i}^{(\ell )},U_{\mathbf{c},i}^{(\ell )}]\Big).
\end{equation}
This score is $0$ when $x^*$ is compatible with at least one cluster box, and it increases with the number of constraints that must be relaxed to include $x^*$. Its maximum value is equal to the number of monitoring variables, when the input exceeds all thresholds of all clusters. The idea behind this heuristic is that the total volume of the set delimited by the box becomes bigger as fewer features define it.

\paragraph{Exceedance distance}
This quantity captures the \emph{magnitude} of violations by computing a distance-to-interval score:
\begin{align}
  \label{eq:dist_to_int}
  d_{\mathbf{c},i}^{(\ell)}(x^*) =
  \begin{cases}
    L_{\mathbf{c},i}^{(\ell)} - \phi_{i}^{(\ell)}(x^*), & \text{if } \phi_{i}^{(\ell)}(x^*) < L_{\mathbf{c},i}^{(\ell)}, \\
    \phi_{i}^{(\ell)}(x^*) - U_{\mathbf{c},i}^{(\ell)}, & \text{if } \phi_{i}^{(\ell)}(x^*) > U_{\mathbf{c},i}^{(\ell)}, \\
    0,                                                  & \text{otherwise}.
  \end{cases}
\end{align}
Aggregating these distances yields a point-to-box distance (a distance to a set). The exceedance-distance score is defined analogously to Equation~\eqref{eq:EC} by selecting the closest cluster:
\begin{equation}
  \label{eq:ED}
  \mathrm{AS\mbox{-}ED}(x^*,k) = \min_{\mathbf{c}\in\mathcal{C}_k}
  \sum_{\ell = 1}^{N_\mathrm{layers}} \sum_{i = 1}^{N_\ell} d_{\mathbf{c},i}^{(\ell)}(x^*).
\end{equation}

This score is a natural choice in the \emph{representation space} induced by the network. However, because it is defined after the mapping $x \mapsto \phi(x)$, $\mathrm{ED}$ does not admit a direct geometric interpretation back in the original input space. In contrast, the exceedance-count score (Equation~\eqref{eq:EC}) retains a simple geometric interpretation which is closely tied to an increase of the corresponding box volume by loosening constraints.

\paragraph{Aggregated anomaly score}
Both anomaly scores defined above are class-conditional and depend on a single class hypothesis. When the classifier assigns high confidence to one class, conditioning on this class is appropriate. However, when predictive uncertainty is high, relying on a single class hypothesis may lead to unreliable anomaly estimates.

To address this issue, we aggregate class-conditional anomaly scores using the network's predictive probabilities. Let $f_\theta(x^\ast)_k$ denote the predicted probability for class $k$. The aggregated anomaly score is defined as
\begin{equation}
  \label{eq:AggAS}
  \mathrm{Agg}(x^\ast) = \sum_{k=1}^{K} f_\theta(x^\ast)_k \cdot \mathrm{AS}(x^\ast, k),
\end{equation}
where $\mathrm{AS}(x^\ast, k)$ denotes the anomaly score using either the exceedance distance or the exceedance count.

This aggregation can be interpreted under the assumption that the class label $k$ is a latent variable and that the anomaly score $\mathrm{AS}(x^\ast,k)$ provides a class-conditional measure of atypicality, the aggregated score corresponds to the expectation of the anomaly under the posterior distribution $p(k \mid x^\ast)$ estimated by the classifier:
\begin{equation}
  \label{eq:AggE}
  \mathrm{Agg}(x^\ast) = \mathbb{E}_{k \sim p(k \mid x^\ast)} \big[ \mathrm{AS}(x^\ast, k) \big].
\end{equation}

From this viewpoint, the aggregation marginalizes the anomaly score over class hypotheses. As a result, inputs that are simultaneously inconsistent with several class-specific bounding boxes and associated with high predictive uncertainty naturally receive higher anomaly scores, improving robustness in ambiguous regions of the input space.

\subsection{Summary of our method}

In summary, our approach consists in two steps. The first one is the definition of the clusters and bounding boxes by using the training data to build the OOD monitor. The second step is the runtime monitoring to detect OOD data. We illustrate the complete method using a schematic representation in Figure~\ref{fig:full_pipeline}.

\begin{figure*}[h!]
  \centering
  % trim={<left> <lower> <right> <upper>}
  \includegraphics[trim={5cm 2.5cm 5cm 2cm},clip,width=\textwidth]{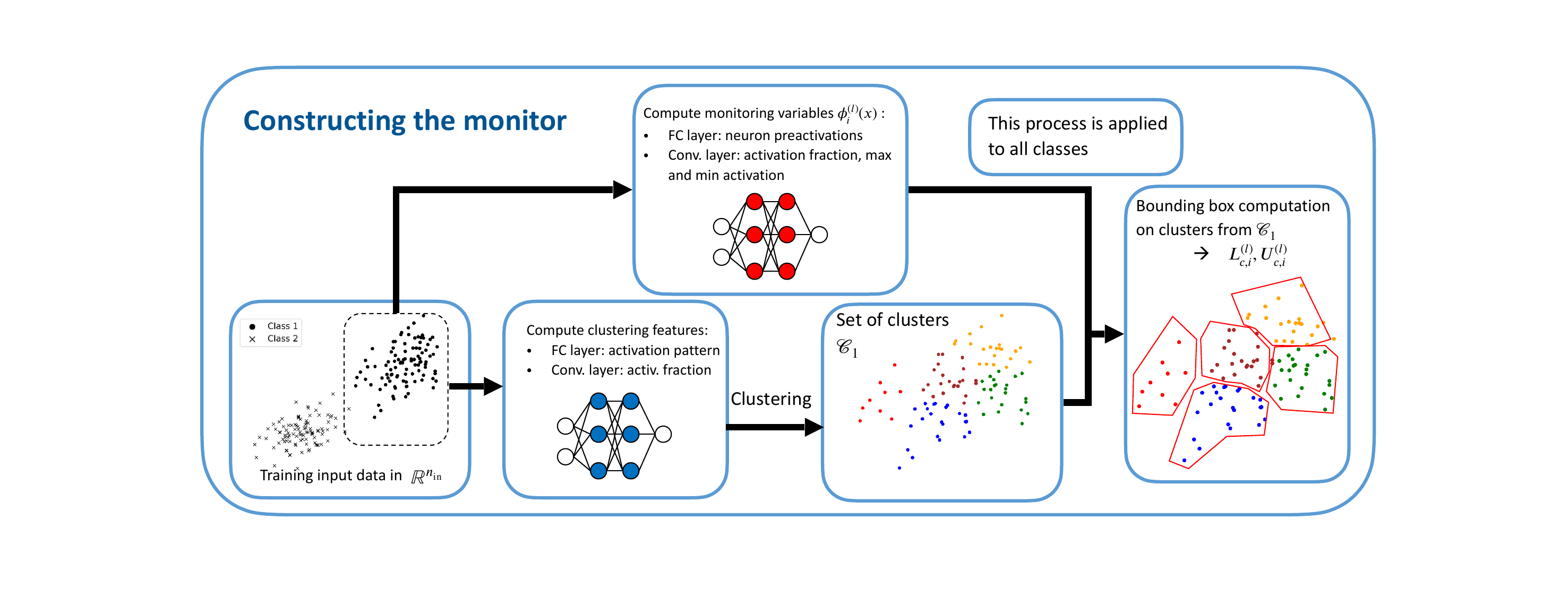}
  \includegraphics[trim={4cm 5cm 5cm 2.5cm},clip,width=\textwidth]{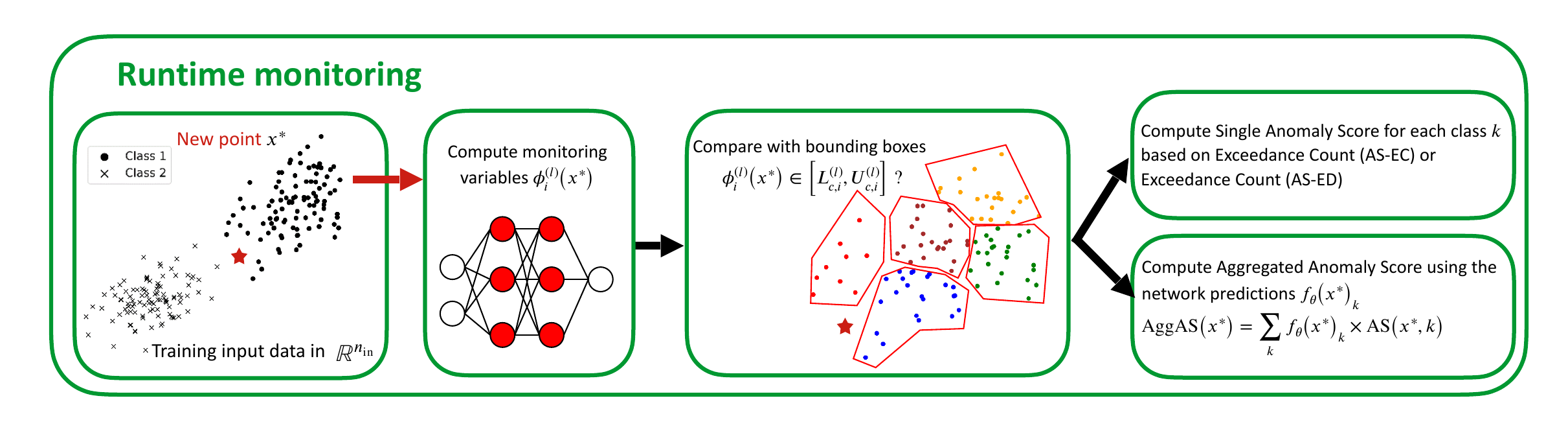}
  \vspace{-.8cm}
  \caption{Complete BBAS pipeline processing: from constructing the monitor to runtime monitoring.}
  \label{fig:full_pipeline}
\end{figure*}

\subsection{Topological interpretation of bounding box abstraction}
% \subsection{Combining Bounding Boxes with Activation Patterns: a Topological Interpretation}
\label{sec:interpretation}

In the case of fully connected networks, the bounding box abstraction has a concrete geometric/topological interpretation that follows directly from a formal construction that we detail in~\ref{app:bb_geometry}. The bounding box abstraction defines a training envelope in the feature space: for a chosen set of hidden features and a set of points, recording their coordinatewise minima and maxima over the reference set has the implication of defining a domain in the input space representing the intersection of a series of sub-level and super-level sets.

With ReLU networks, this has an immediate geometric consequence: the input space is partitioned into activation regions on which the network is affine, as shown in~\ref{sec:act_patterns}. Inside a single activation region, each hidden feature is affine in the input, so the bounding box abstraction becomes a finite set of linear inequalities. In other words, within one activation region, the preimage of the abstraction is a convex polyhedron, and under the mild full-rank condition on the first layer, this polyhedron is bounded, hence a polytope (see Appendix Lemma~\ref{firstlayer} and its extension Lemma~\ref{alllayers}). Globally, the preimage of the bounding box  is just the union of these polytope fragments across the activation regions that intersect the box constraints.

A practical rule of thumb is that the more the box crosses ReLU on/off switches, the more fragments can appear. The \ref{sec:act_patterns} makes this precise by showing how a box can be decomposed by sign configurations, yielding an a priori bound of at most $2^{n_b}$ nonempty fragments when $n_b$ monitored coordinates straddle zero.

Clustering is the mechanism that keeps the abstraction local in the ReLU partition. If we built a single box over all training points, the abstraction would typically span many activation boundaries, which increases the number of activation-region fragments the preimage can break into. By first splitting the data into clusters that are tight in activation space, each per-cluster box tends to cross fewer activation boundaries and therefore intersects fewer activation regions. Concretely, as stated in Section~\ref{seq:clustering}, we cluster activation patterns under the Hamming distance, defined in Equation~\eqref{eq:hamming_dist}, using the agglomerative clustering method with complete linkage, so that patterns within a cluster differ in only a few bits and typically correspond to nearby activation regions. As a result, the axis-aligned min--max envelope fitted to each cluster introduces less empty volume than a global box and yields a sharper operational domain.

We give a concrete illustration of this geometry using the two-moons regression example. We train a ReLU neural network on inputs in $\mathbb{R}^2$ with the target:

\begin{equation}
  f(x)=\sin(\pi x_1)+\sin(\pi x_2).
\end{equation}

We build a bounding box abstraction in three steps: (i) we first group the training samples by identical activation patterns, (ii) we then cluster these groups into $30$ clusters using the Hamming distance between their activation-pattern vectors, and (iii) for each resulting cluster we define a bounding box on the entire hidden-features of the network. This procedure yields $30$ bounding boxes whose preimages are unions of activation-region pieces. Figure~\ref{fig:mlp_two_moons_geom} (left) visualizes where ReLU units switch state and thus how the plane is partitioned into activation regions, while Figure~\ref{fig:mlp_two_moons_geom} (right) shows how the $30$ bounding boxes select subsets aligned with this partition: each cluster-level box covers data that may span several neighboring activation regions.

In this example, we can see that early layers tend to reflect the input-space geometry: they preserve simple notions of closeness that come from the raw input coordinates. As depth increases, representations are increasingly shaped by the function the network has learned, so the geometry becomes aligned with what the model is trying to approximate. In a classification task, we therefore expect late layers to align with the class-relevant representation (samples that the model treats similarly for the decision tend to become close), while early layers mostly align with raw input distances. In image classification, those raw input distances are often not very meaningful semantically, since pixel-space closeness can fail to capture “same object/class”, which is one reason many OOD detection methods focus on later-layer features rather than early ones.

\begin{figure}[t]
  \centering
  \begin{subfigure}[t]{0.49\linewidth}
    \centering
    \includegraphics[width=\linewidth]{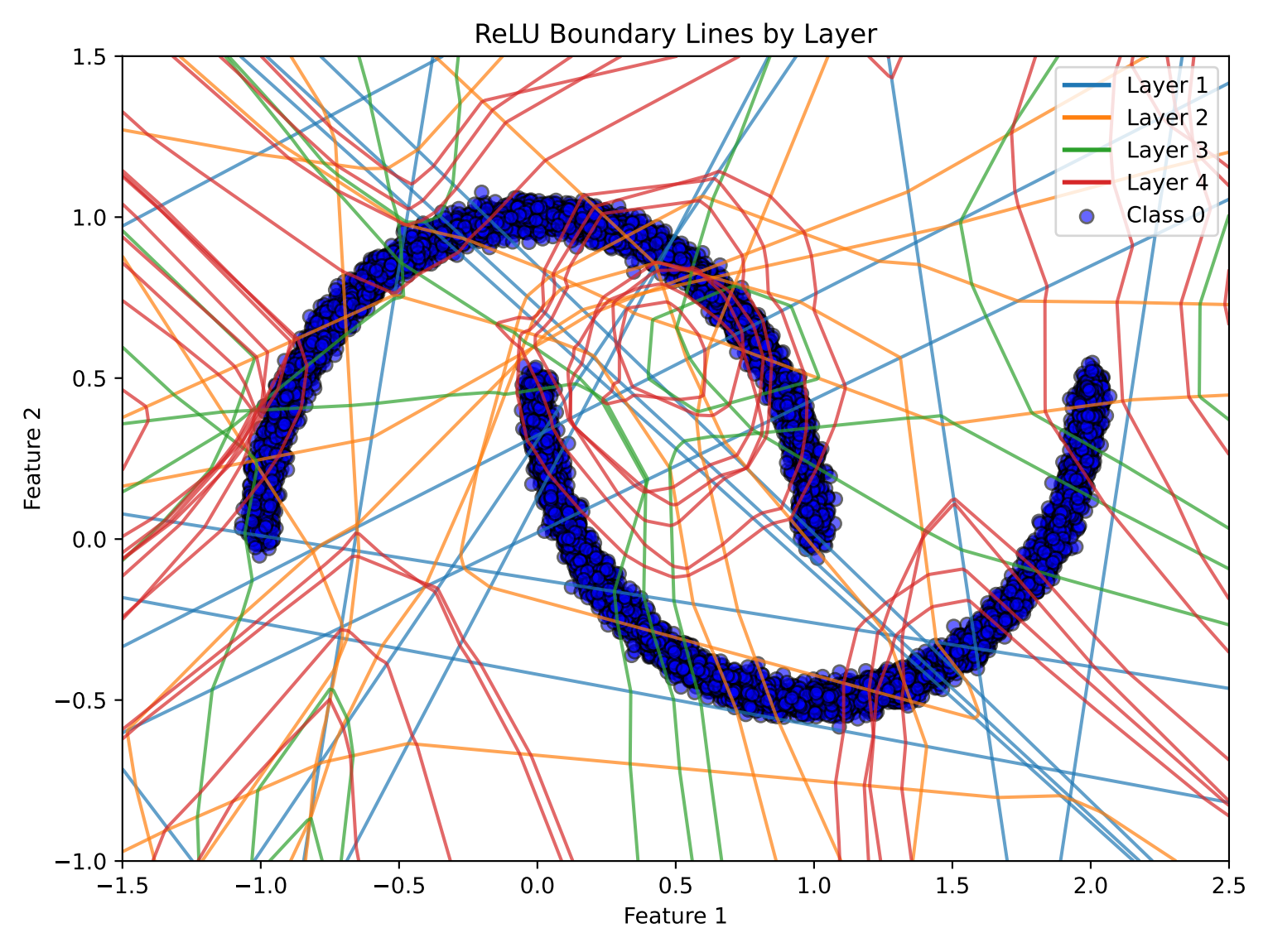}
    \caption{ReLU boundary lines by layer. Each curve is a subset of $\{x:\,a_i^{(\ell)}(x)=0\}$ for some hidden unit $i$ in layer $\ell$; curves are colored by $\ell$. Together, they form the boundaries of activation regions.}
    \label{fig:relu_boundaries_mlp}
  \end{subfigure}\hfill
  \begin{subfigure}[t]{0.49\linewidth}
    \centering
    \includegraphics[width=\linewidth]{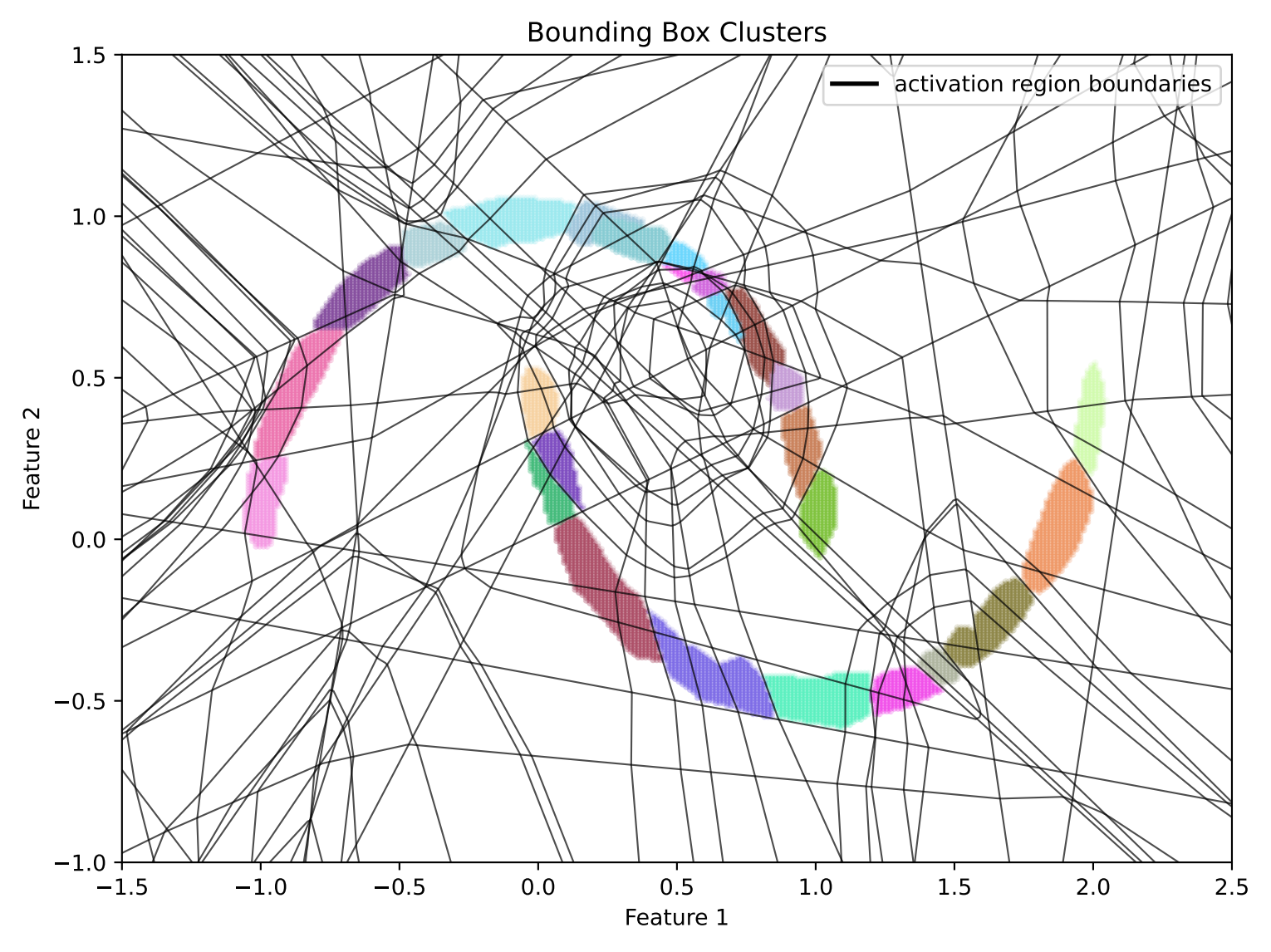}
    \caption{Visualization of the input region induced by $30$ bounding boxes. Each color corresponds to one bounding box and highlights the subset of the input plane whose hidden features fall inside that box. The activation-region boundaries are overlaid in black.}
    \label{fig:bb_clusters_mlp}
  \end{subfigure}
  \caption{Two-moons regression example illustrating the geometric interpretation of activation regions and cluster-wise bounding boxes for a ReLU neural network trained on $f(x)=\sin(\pi x_1)+\sin(\pi x_2)$.}
  \label{fig:mlp_two_moons_geom}
\end{figure}

The same topological picture carries over to CNNs, but the variables we bound must be adapted to the translation invariance of CNNs. A ReLU CNN is still a continuous piecewise-linear map: fixing the sign of every preactivation entry by channel and spatial location makes each convolution and subsequent linear operation affine, so the input space is again partitioned into activation regions that are convex polytopes. As in the fully connected case, a bounding-box monitor defines an operational domain as the preimage of an axis-aligned set under a continuous feature map. Restricted to any activation region, this preimage is obtained by intersecting that region with constraints induced by the monitored features, and the full operational domain is the union of these pieces across all activation regions consistent with the data.

The key difference is that convolutional hidden states are spatial tensors and convolution applies the same local linear map to a sliding collection of input patches, so directly bounding all spatial activations would be both high-dimensional and overly sensitive to translations. This motivates monitoring channel-wise spatial summaries such as activation fractions and channel minima/maxima, which collapse many spatially permuted activation patterns into the same monitored signature while retaining what matters for convolutional processing: how often a channel is active and how extreme its responses become. This coarsens the activation-pattern description by identifying larger equivalence classes of activation regions that share similar channel-level occupancy and envelope behavior, yielding an operational domain that still decomposes along activation regions but is less tied to precise spatial arrangement.

The patch-space view also clarifies what different layers constrain: because each spatial unit at layer $\ell$ depends only on an input patch determined by its receptive field, early-layer monitoring constrains low-level patch statistics (edges, colors, short-range textures), and deep-layer monitoring constrains large-context structure and semantics as receptive fields expand with depth and architectural choices (stride, pooling, dilation). In this sense, a bounding box over channel-wise summaries at layer $\ell$ accepts inputs whose patches, passed through the shared convolutional map over spatial positions, produce channel-wise activation behavior consistent with the training set. If the monitored vectors are additionally normalized, the geometry shifts from magnitude-sensitive constraints to directional ones: bounding an unnormalized vector restricts both scale and channel profile, whereas bounding its normalized version restricts primarily the relative channel pattern (a unit-sphere, angular constraint), making the monitor more invariant to global rescalings but less sensitive to shifts expressed mainly through magnitude. This normalization also breaks the strictly polyhedral picture, since even within a single activation region dividing by an $\ell_2$ norm introduces norm-coupled, generally curved boundaries; the operational domain remains organized by activation regions, but the accepted pieces become activation-region fragments selected by angular constraints rather than purely affine bounds.

\section{Experiments}
\label{sec:experiments}

In this section, we empirically evaluate our bounding-box anomaly scoring framework under the OpenOOD evaluation protocol.
We first analyze the class-dependent structure of our proposed convolutional monitoring variables by visualizing their representations with t-SNE on CIFAR-10.
We then conduct a comprehensive ablation study on CIFAR-10 and CIFAR-100, reporting AUROC separately on Far-OOD (covariate shift) and Near-OOD (semantic shift) splits, to quantify the impact of (i) the choice of monitored layers, (ii) the statistics used to build the bounding boxes, and (iii) the clustering strategy used for box construction.
Finally, we compare against representative post-hoc OOD detection approaches and assess architectural transfer by applying the method to vision transformers, showing that the approach remains competitive beyond CNN backbones.

\subsection{Experimental setup}
\label{sec:experimental_setup}

\paragraph{Benchmark protocol}
For the experimental setting, we use OpenOOD\footnote{\url{https://zjysteven.github.io/OpenOOD/}} ~\citep{Zhang2023OpenOODVE}, a standardized benchmark for OOD detection. OpenOOD is based on image classification datasets: CIFAR-10, CIFAR-100~\citep{cifar-100}, ImageNet-200 and ImageNet-1K~\citep{deng2009imagenet}. For each dataset, it uses a set of OOD data originated from other image datasets. MNIST~\citep{deng2012mnist}, SVHN~\citep{svhn}, Textures~\citep{texture}, Places365~\citep{zhou2017places} are used as Far-OOD for CIFAR datasets; iNaturalist~\citep{DBLP:journals/corr/HornASSAPB17}, Textures~\citep{texture}, OpenImage-O~\citep{DBLP:journals/corr/abs-1811-00982} are used as Far-OOD for ImageNet datasets. Tiny ImageNet~\citep{le_tiny_nodate} and CIFAR-100 are used as NearOOD for the CIFAR-10 dataset; Tiny ImageNet and CIFAR-10 are used as Near-OOD for the CIFAR-100 dataset. SSB-hard~\citep{vaze2022openset} and NINCO~\citep{bitterwolf2023outfixingimagenetoutofdistribution} are used as Near-OOD for both ImageNet datasets. OpenOOD summarizes the performances of different OOD detection methods on each dataset based on the AUROC metric.

\paragraph{Backbone models}
All OOD detectors (including ours) are applied \emph{post-hoc} on \emph{frozen} pre-trained classifiers provided by OpenOOD, allowing for a fair comparison between the different methods. For CIFAR-10, CIFAR-100, and ImageNet-200, we use the official cross-entropy \textbf{ResNet-18} checkpoints released with OpenOOD~\citep{he2016deep,Zhang2023OpenOODVE}.
For ImageNet-1K, we follow the OpenOOD protocol and use a \textbf{ResNet-50} model initialized from the official torchvision weights~\citep{he2016deep,10.1145/1873951.1874254,Zhang2023OpenOODVE}.
Finally, to assess architectural transfer beyond CNNs, we also report results on transformer backbones, namely \textbf{ViT-B/16} and \textbf{Swin-T}, using the official torchvision checkpoints~\citep{dosovitskiy2021imageworth16x16words,liu2021swintransformerhierarchicalvision,10.1145/1873951.1874254}.

\subsection{Class-dependent structure of convolutional features}
\label{sec:class-dependent-features}

To justify the use of channel-wise convolutional statistics for OOD detection, we examine whether these features are class dependent on in-distribution data.
Given a ResNet trained on CIFAR-10, we extract several feature representations from a convolutional layer: (i) per-channel maximum activations, (ii) per-channel minimum activations, and (iii) per-channel activation fractions.
For reference, we also consider the standard \emph{penultimate-layer} features.
We then apply t-SNE to visualize these high-dimensional representations in two dimensions, Figure~\ref{fig:tsne} shows the result.
The samples form compact, well-separated clusters aligned with CIFAR-10 class labels, indicating that even simple channel-wise summaries of convolutional responses retain strong class-specific structure.
This observation supports our choice to include additional features, because in-distribution samples concentrate in class-conditioned regions of feature space, OOD samples are expected to deviate from these regions, enabling effective detection.

\begin{figure}[h]
  \centering

  \begin{subfigure}[t]{0.45\linewidth}
    \centering
    \includegraphics[width=\linewidth, trim=0 0 0 30, clip]{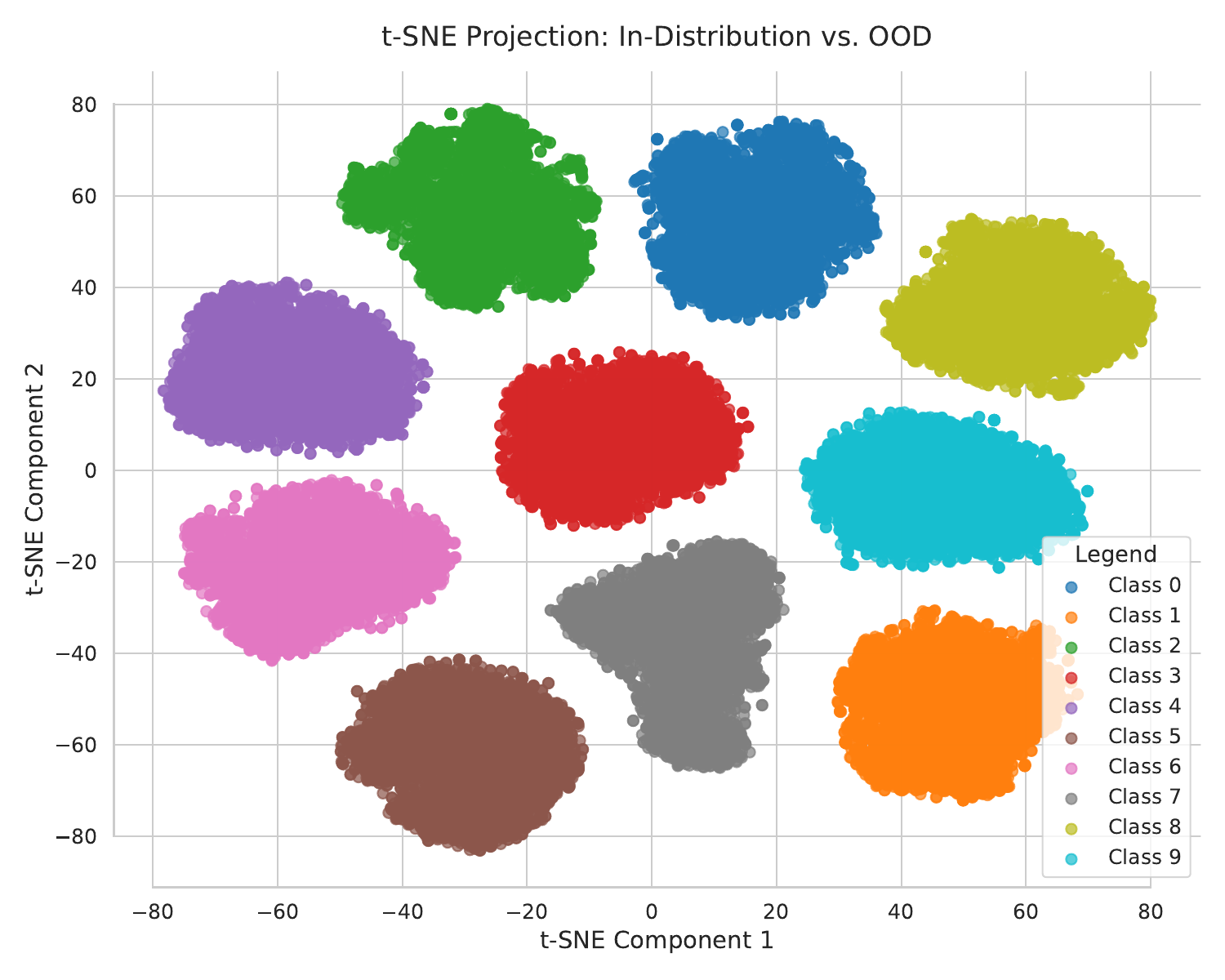}
    \caption{Per-channel maximum activations}
    \label{fig:tsne-max}
  \end{subfigure}
  \hfill
  \begin{subfigure}[t]{0.45\linewidth}
    \centering
    \includegraphics[width=\linewidth, trim=0 0 0 30, clip]{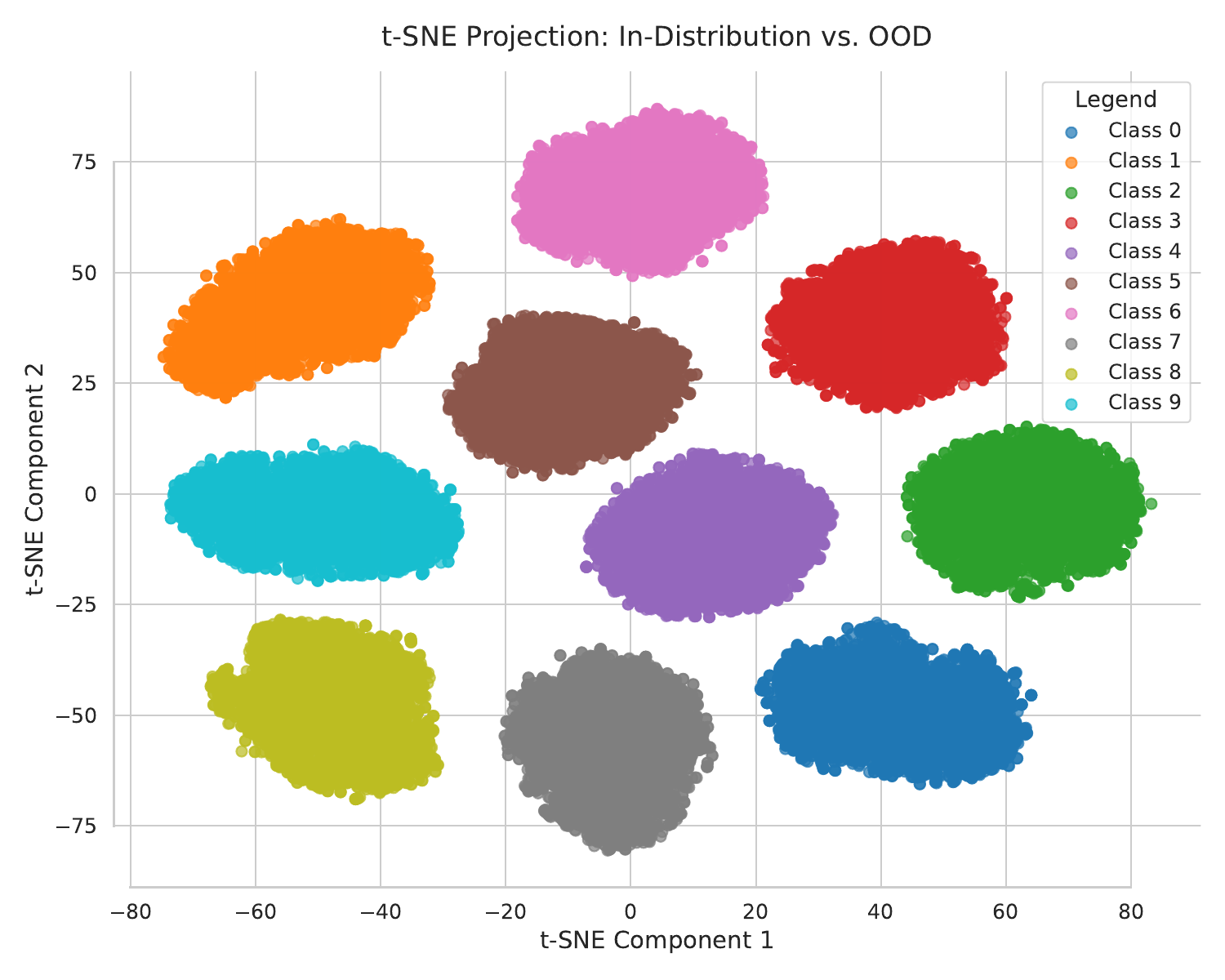}
    \caption{Per-channel minimum activations}
    \label{fig:tsne-min}
  \end{subfigure}

  \vspace{0.5em}

  \begin{subfigure}[t]{0.45\linewidth}
    \centering
    \includegraphics[width=\linewidth, trim=0 0 0 30, clip]{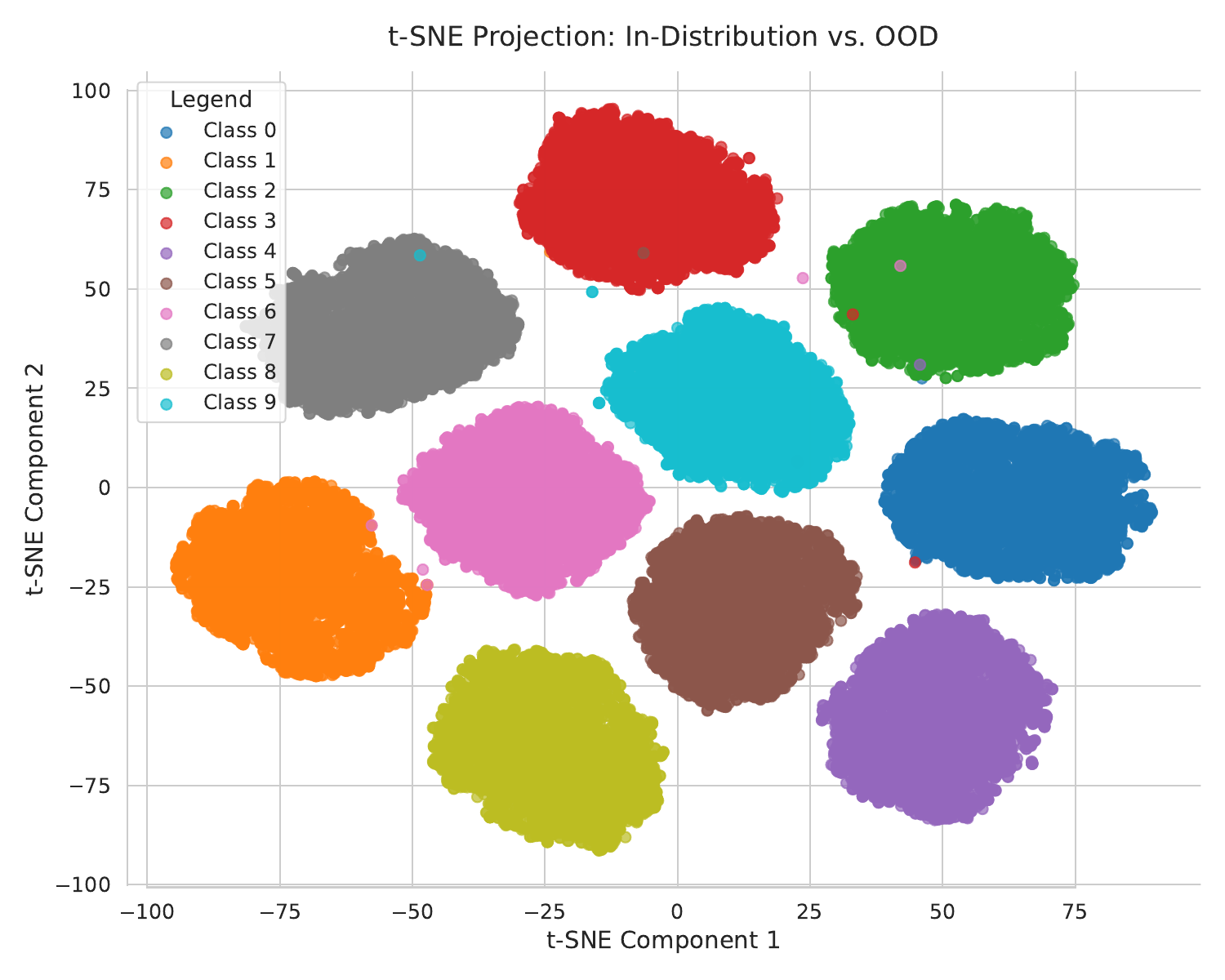}
    \caption{Per-channel activation fractions}
    \label{fig:tsne-aps}
  \end{subfigure}
  \hfill
  \begin{subfigure}[t]{0.45\linewidth}
    \centering
    \includegraphics[width=\linewidth, trim=0 0 0 30, clip]{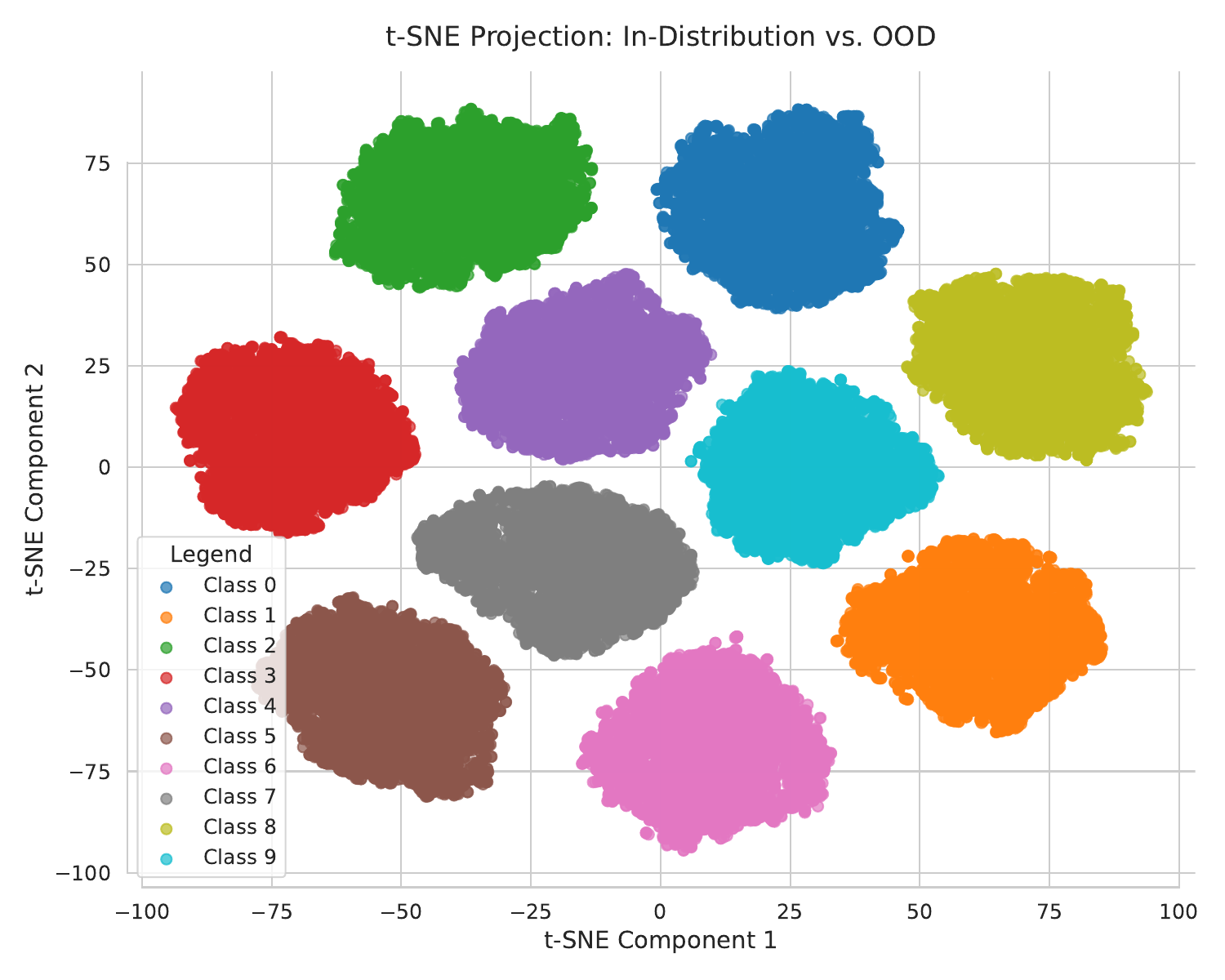}
    \caption{Penultimate-layer features}
    \label{fig:tsne-penultimate}
  \end{subfigure}

  \caption{
    t-SNE visualizations of different feature representations extracted from a ResNet trained on CIFAR-10.
    Across all representations, samples form compact, well-separated class clusters, demonstrating robust class-dependent structure.
  }
  \label{fig:tsne}
\end{figure}

% we first present an ablation study to analyse the impact of different parameter choices on the performance of our method.
\subsection{Ablation study}
\label{sec:ablation}

In this ablation study, we test the claims made in Section~\ref{sec:bbas}  by varying (i) which layers are considered as monitoring variables, (ii) which layer statistics define the bounding box, and (iii) how clustering is performed.
We conduct ablation studies on both Far-OOD (covariate shift) and Near-OOD (semantic shift) datasets derived from the CIFAR-10 and CIFAR-100 benchmarks. All experiments are performed using the same backbone architecture across datasets.
Our method relies on clustering samples based on the activation fraction computed from convolutional features, while the bounding box representation is constructed using the combination of all feature statistics.
Unless stated otherwise, the default configuration extracts features from the last two convolutional layers of each residual blocks in addition to the penultimate layer, uses the combined bounding-box features, and employs activation fraction as the clustering feature, and uses agglomerative clustering with complete linkage. The number of clusters per class is set to the integer part of the square root of the number of retained training samples in that class. At test time, we compute the anomaly score using the exceedance count~\ref{eq:EC} to the corresponding class box.

The results of the ablation study are summarized in Table~\ref{tab:ablation}, where we report AUROC scores separately for Far-OOD and Near-OOD datasets.

\begin{table}[!htbp]
  \centering
  \caption{Ablation study of layer selection, bounding-box feature, and clustering feature choices.
    We report AUROC (\%) on Far-OOD (covariate and semantic shift) and Near-OOD (semantic shift only) datasets for CIFAR-10 and CIFAR-100.
    \textbf{Bold} values indicate the best performance within each ablation group.}\vspace{.1cm}
  \label{tab:ablation}
  \resizebox{\columnwidth}{!}{%
    \begin{tabular}{lcccc}
      \toprule
      \multirow{2}{*}{\textbf{Setting}}     &
      \multicolumn{2}{c}{\textbf{CIFAR-10}} &
      \multicolumn{2}{c}{\textbf{CIFAR-100}}                                                                  \\
      \cmidrule(lr){2-3}\cmidrule(lr){4-5}
                                            & \textbf{Far-OOD} $\uparrow$ & \textbf{Near-OOD} $\uparrow$
                                            & \textbf{Far-OOD} $\uparrow$ & \textbf{Near-OOD} $\uparrow$      \\
      \midrule

      \textbf{a) Layer Choice}              &                             &                              &  & \\
      Penultimate only
                                            & 92.52 $\pm$ 0.45            & 90.04 $\pm$ 0.35
                                            & 83.48 $\pm$ 0.86            & \textbf{79.62 $\pm$ 0.34}         \\
      Last 1 layer
                                            & 92.94 $\pm$ 0.27            & 90.25 $\pm$ 0.36
                                            & 84.74 $\pm$ 0.48            & 77.73 $\pm$ 0.61                  \\
      Last 2 layers (default)
                                            & 94.47 $\pm$ 0.24            & 90.67 $\pm$ 0.42
                                            & 88.03 $\pm$ 0.66            & 77.45 $\pm$ 0.09                  \\
      Last 3 layers
                                            & 95.24 $\pm$ 0.33            & \textbf{90.75 $\pm$ 0.27}
                                            & \textbf{90.01 $\pm$ 0.32}   & 77.05 $\pm$ 0.47                  \\
      Last 4 layers
                                            & \textbf{95.45 $\pm$ 0.07}   & 90.68 $\pm$ 0.41
                                            & 89.91 $\pm$ 0.74            & 76.21 $\pm$ 0.74                  \\

      \midrule
      \textbf{b) Bounding-box Feature}      &                             &                              &  & \\
      Penultimate feature
                                            & 92.52 $\pm$ 0.45            & 90.04 $\pm$ 0.35
                                            & 83.48 $\pm$ 0.86            & \textbf{79.62 $\pm$ 0.34}         \\
      Min activation
                                            & 94.27 $\pm$ 0.12            & 90.54 $\pm$ 0.31
                                            & 86.44 $\pm$ 0.64            & 77.92 $\pm$ 0.42                  \\
      Max activation
                                            & 93.29 $\pm$ 0.08            & 89.72 $\pm$ 0.57
                                            & 87.50 $\pm$ 0.59            & 75.83 $\pm$ 0.31                  \\
      Activation fraction
                                            & \textbf{95.83 $\pm$ 0.24}   & 90.15 $\pm$ 0.54
                                            & 87.18 $\pm$ 0.66            & 72.06 $\pm$ 0.19                  \\
      Combined use (default)
                                            & 94.47 $\pm$ 0.24            & \textbf{90.67 $\pm$ 0.42}
                                            & \textbf{88.03 $\pm$ 0.66}   & 77.45 $\pm$ 0.09                  \\

      \midrule
      \textbf{c) Clustering Feature}        &                             &                              &  & \\
      Min activation
                                            & 94.25 $\pm$ 0.11            & 90.63 $\pm$ 0.36
                                            & 87.88 $\pm$ 0.58            & 78.15 $\pm$ 0.57                  \\
      Max activation
                                            & 94.27 $\pm$ 0.26            & 90.50 $\pm$ 0.36
                                            & 87.47 $\pm$ 0.88            & 77.94 $\pm$ 0.43                  \\
      Penultimate
                                            & 94.07 $\pm$ 0.44            & 90.64 $\pm$ 0.19
                                            & 86.68 $\pm$ 1.19            & \textbf{78.82 $\pm$ 0.47}         \\
      Activation fraction (default)
                                            & \textbf{94.47 $\pm$ 0.24}   & \textbf{90.67 $\pm$ 0.42}
                                            & \textbf{88.03 $\pm$ 0.66}   & 77.45 $\pm$ 0.09                  \\

      \bottomrule
    \end{tabular}
  }
\end{table}

\paragraph{a) Layer choice}

We first study the effect of using features from different convolutional depths, probing the depth-dependent geometry discussed earlier in Section~\ref{sec:interpretation}: shallow layers capture low-level statistics, whereas deeper layers align more with the learned function and decision-relevant structure. We progressively add earlier-layer features to the penultimate layer. On both CIFAR-10 and CIFAR-100, this improves Far-OOD detection up to a point, increasing AUROC under covariate shift. In contrast, Near-OOD performance is unchanged or degrades, indicating that shallow features add little decision-relevant signal and can introduce noise when separating semantically similar but distribution-shifted data. This is expected: Near-OOD datasets share patch-level statistics with the in-distribution data, so low-level cues are not discriminative.

This also reflects the role of monitoring variables: adding early-layer variables can tighten boxes, but may reduce robustness by adding unstable or uninformative constraints. We therefore use the last two convolutional layers, where features are sufficiently task-aligned while avoiding the noisy constraints introduced by shallow layers.

\paragraph{b) Bounding-box feature}

We next analyze the contribution of individual feature statistics used to construct the bounding box.
We compare activation fraction, minimum activation, maximum activation, and the penultimate-layer feature, as well as their combined use.
While single-feature variants achieve competitive performance, combining all feature statistics consistently yields the most robust performance across datasets.
This confirms that different statistics capture complementary aspects of activation behavior, and therefore we adopt their combination in the final framework.

\paragraph{c) Clustering feature}

We analyze the effect of the clustering feature while keeping the layer configuration and combined bounding-box representation fixed.
We find that performance differences across clustering features are relatively small, indicating that the proposed framework is robust to this design choice.
Nonetheless, activation fraction consistently provides strong and stable results, particularly for Far-OOD detection, and is therefore used as the default clustering feature in all experiments.

\paragraph{d) Clustering algorithm and number of clusters}

We further analyze the impact of the clustering algorithm and the number of clusters per class on OOD detection. Keeping all other components fixed, we vary the clustering method (single/complete/average linkage for hierarchical clustering and k-means) and sweep the cluster count. Figure~\ref{fig:cluster-ablation} reports AUROC as a function of the number of clusters.

Single linkage is stable across all experiments. This matches the analysis in Section~\ref{seq:clustering}: single linkage often yields elongated, connectivity-based clusters, and increasing the number of clusters does not substantially reduce the volume of the union of axis-aligned boxes. As a result, the accepted set changes little with the cluster count, and AUROC remains largely unchanged.

Complete linkage is sensitive to the cluster count. In settings where clustering affects performance, AUROC either increases or decreases as more clusters are introduced. On Near-OOD CIFAR-100, where each class has fewer samples, AUROC decreases slightly with more clusters, consistent with increased sensitivity to noise when estimating coordinate-wise extrema from small clusters. In contrast, on CIFAR-10 (both Near-OOD and Far-OOD), AUROC improves as the number of clusters increases, indicating that additional clusters tighten the abstraction.

Average linkage typically lies between single and complete linkage. k-means performs worse than complete linkage, which is consistent with the limitations of centroid-based clustering in high-dimensional feature spaces, where Euclidean distances become less informative.

\begin{figure}[!htbp]
  \centering

  \begin{subfigure}[t]{0.48\linewidth}
    \centering
    \includegraphics[width=\linewidth]{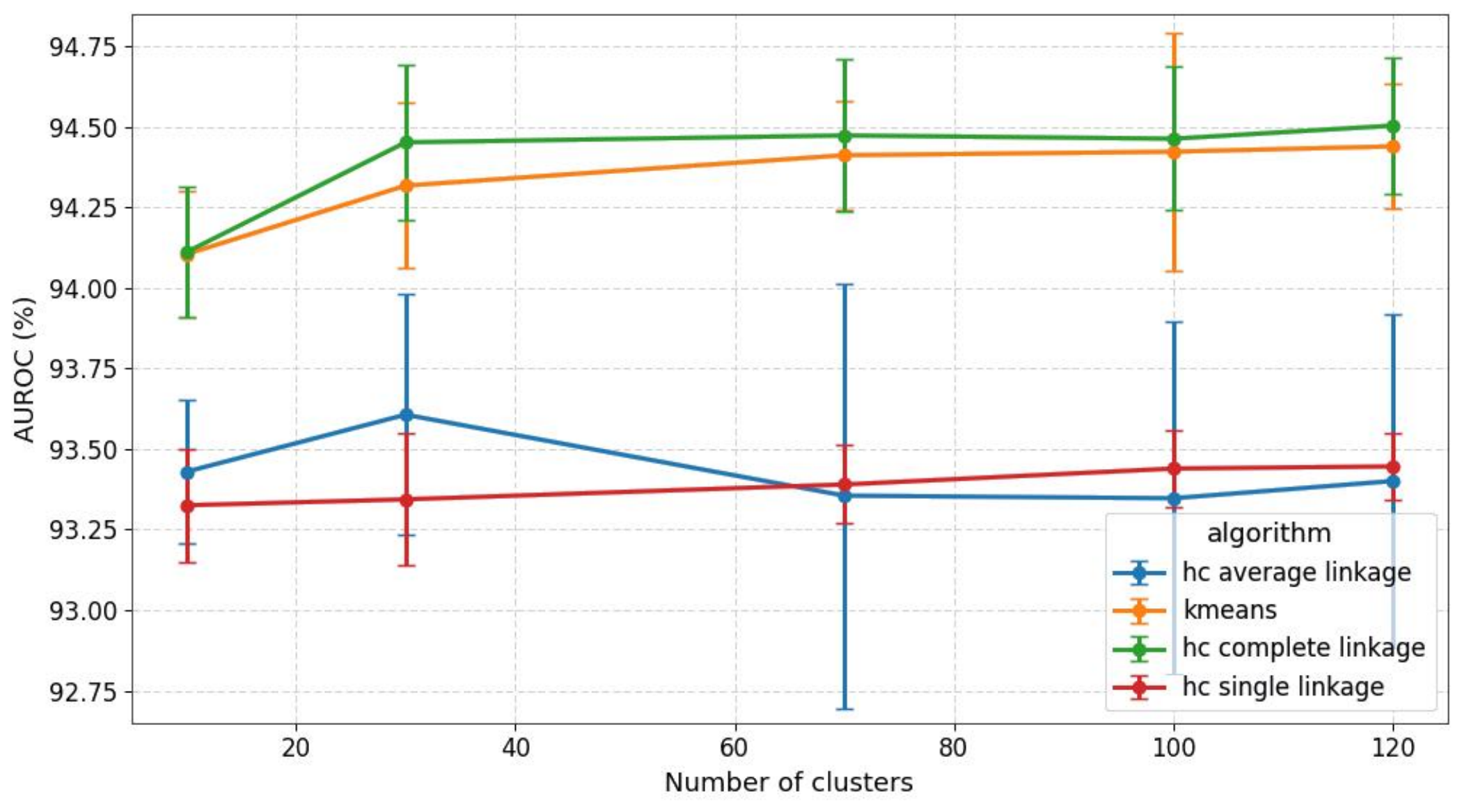}
    \caption{CIFAR-10 Far-OOD}
    \label{fig:cluster-c10-far}
  \end{subfigure}
  \hfill
  \begin{subfigure}[t]{0.48\linewidth}
    \centering
    \includegraphics[width=\linewidth]{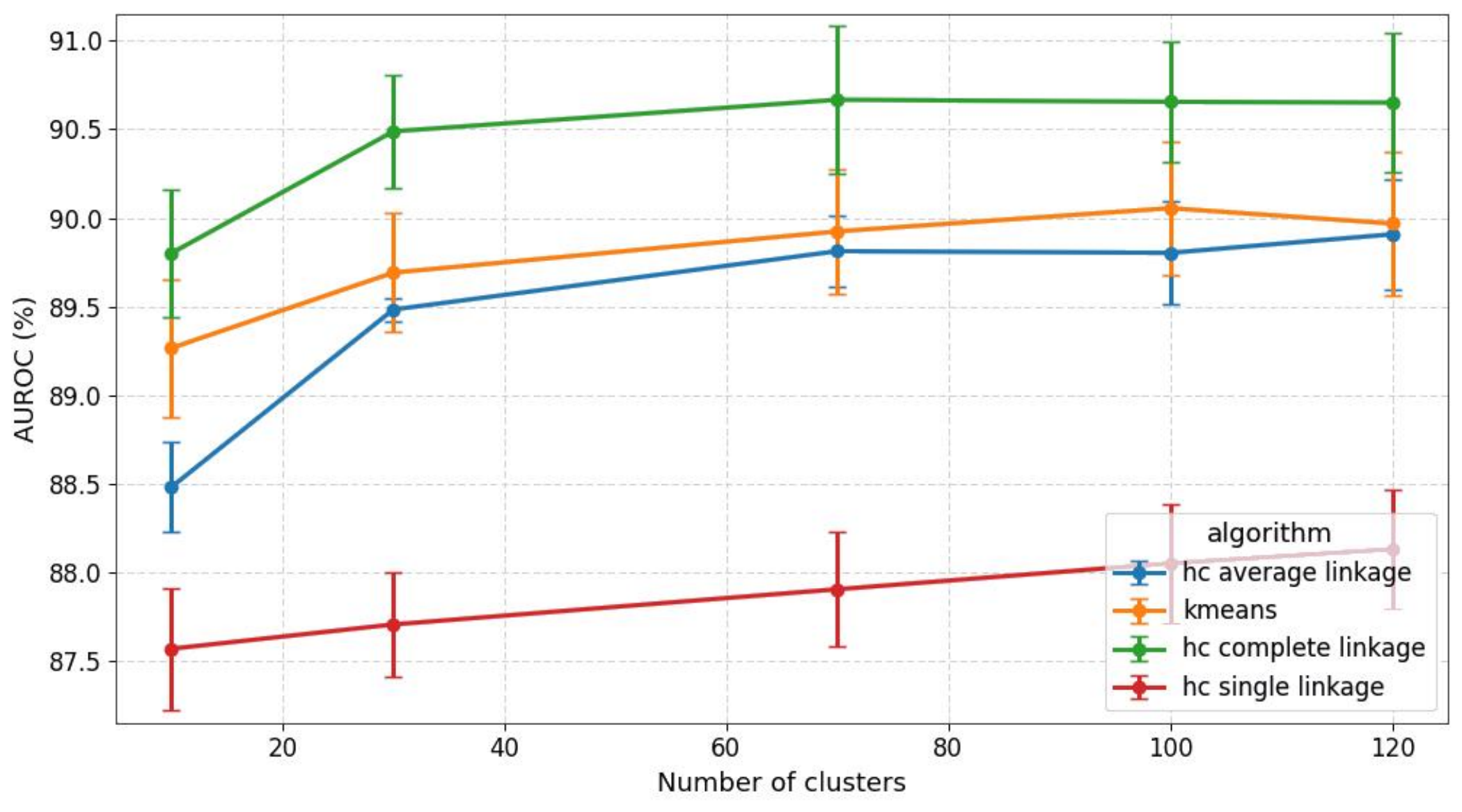}
    \caption{CIFAR-10 Near-OOD}
    \label{fig:cluster-c10-near}
  \end{subfigure}

  \vspace{0.6em}

  \begin{subfigure}[t]{0.48\linewidth}
    \centering
    \includegraphics[width=\linewidth]{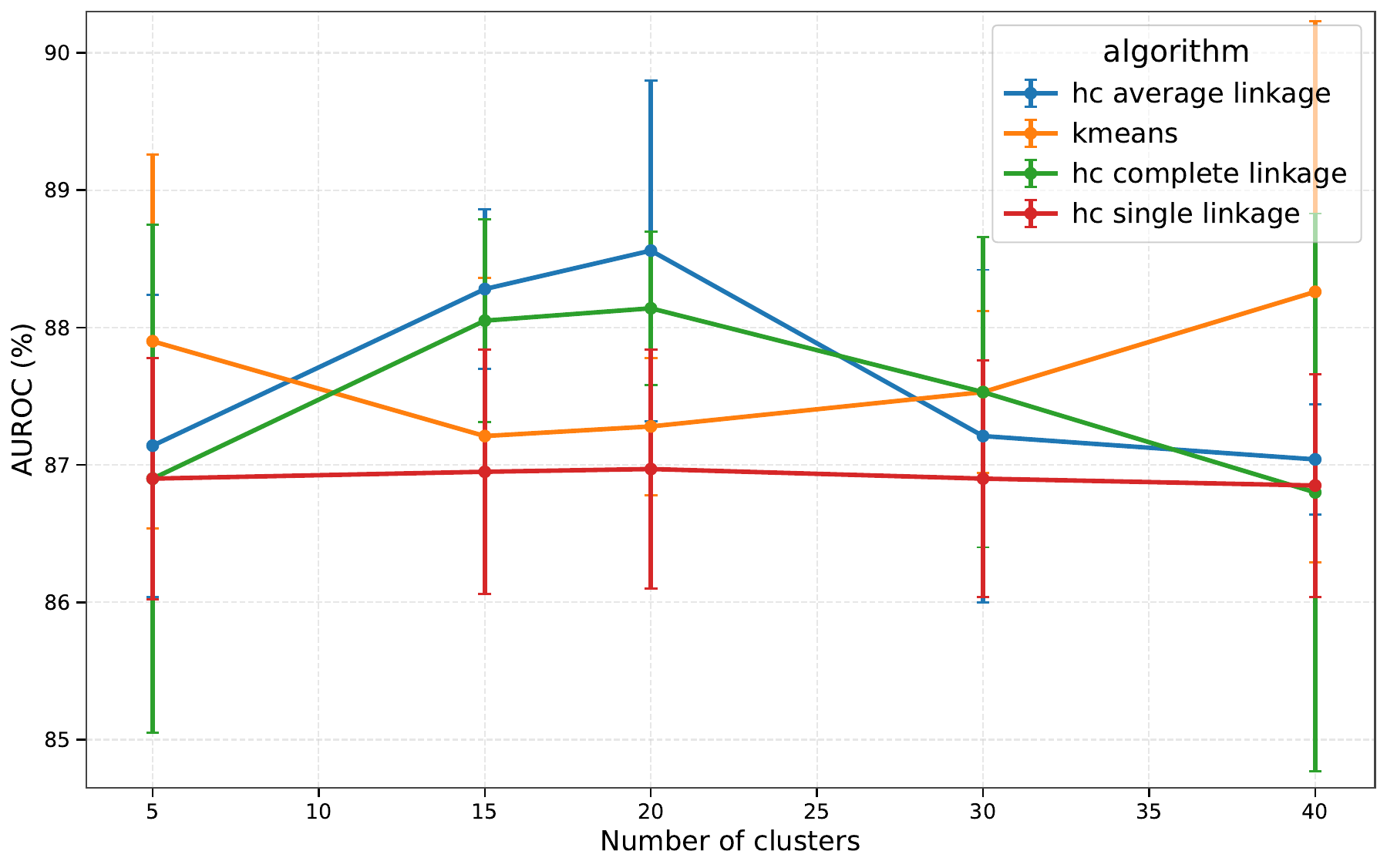}
    \caption{CIFAR-100 Far-OOD}
    \label{fig:cluster-c100-far}
  \end{subfigure}
  \hfill
  \begin{subfigure}[t]{0.48\linewidth}
    \centering
    \includegraphics[width=\linewidth]{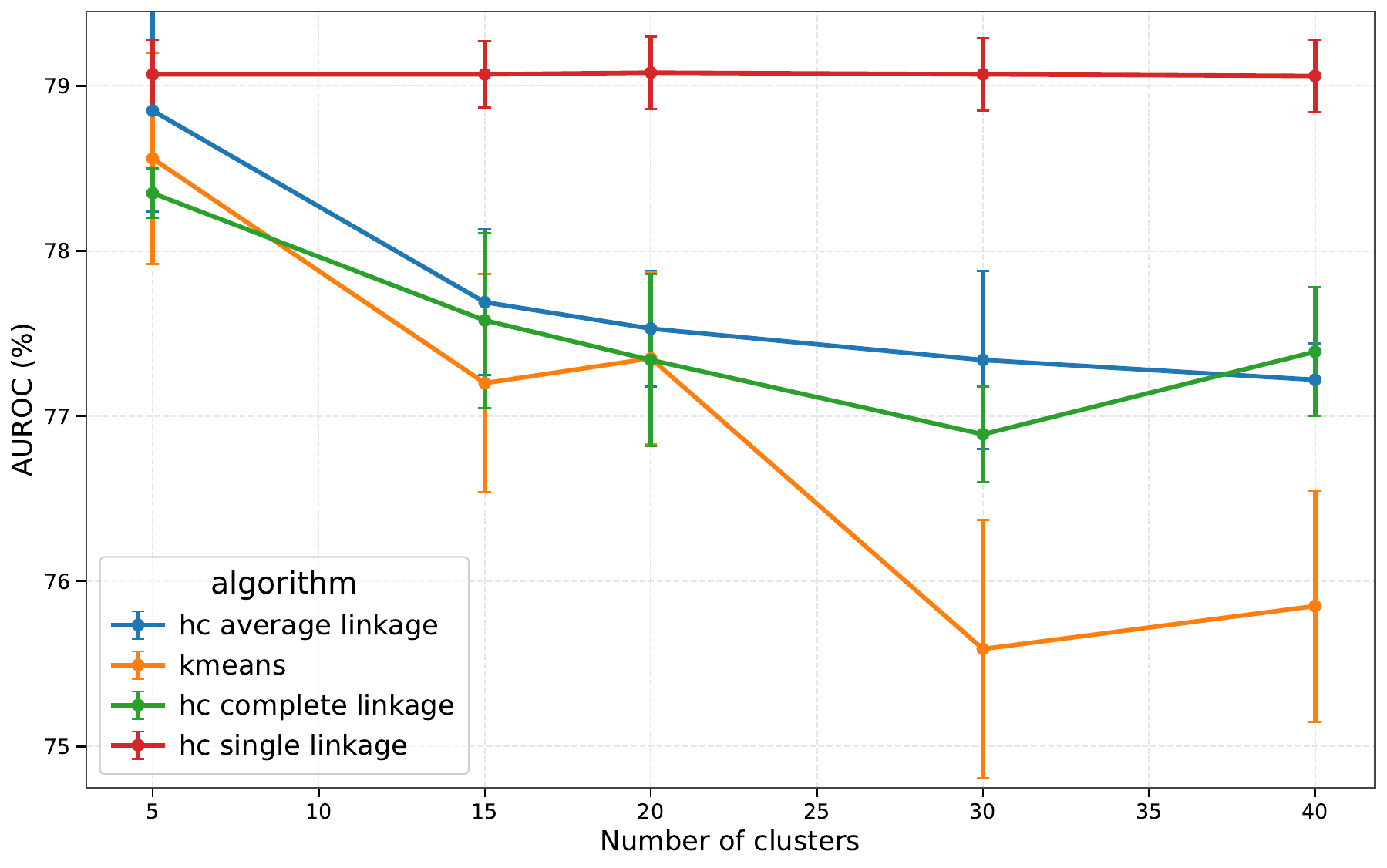}
    \caption{CIFAR-100 Near-OOD}
    \label{fig:cluster-c100-near}
  \end{subfigure}

  \caption{
    Effect of the clustering algorithm and the number of clusters per class on AUROC.
    Agglomerative clustering with complete linkage consistently yields the strongest performance for Far-OOD detection,
    while Near-OOD performance is less sensitive to the clustering choice.
  }
  \label{fig:cluster-ablation}
\end{figure}

\FloatBarrier

\subsection{Comparison with state-of-the-art methods on CNNs}

For these comparisons, we follow the baseline configuration described in Section~\ref{sec:ablation} and report results with both Aggregated and Single Anomaly Scores. On CIFAR-10/100 we use the penultimate representation together with the last two convolutional layers, while on ImageNet-200/1K we pair the same penultimate representation with only the last convolutional layer. We use a set of ready to use pre-trained ResNet models~\citep{he2016deep} as detailed in Section~\ref{sec:experimental_setup}. Tables~\ref{tab:global_perf_farood} and~\ref{tab:global_perf_nearood} summarize the performance of state-of-the-art \textit{post-hoc} methods and report the corresponding results of our proposed BBAS method across a range of Far-OOD and Near-OOD benchmarks, respectively. We also report runtime costs for the main components of the proposed pipeline on CIFAR-100. A detailed breakdown is provided in Table~\ref{tab:comput-cost} in Appendix~\ref{app:runtime_costs}.

For clarity, we report a compact set of widely used \textit{post-hoc} baselines that (i) are representative of distinct scoring principles, (ii) are readily reproducible with the same pre-trained ResNet backbones, and (iii) have been shown to be competitive across common OOD benchmarks. Concretely, we include representative baselines as follows: MSP scores samples by the maximum softmax probability~\citep{hendrycks2017baseline}, and ODIN improves confidence separation using temperature scaling with a small input perturbation~\citep{liang2018enhancing}. VIM leverages feature-space geometry by combining a logit-related term with the residual outside an ID subspace~\citep{Wang_2022_CVPR}. GEN leverages a generative view of feature statistics to produce an OOD score that is less tied to the softmax confidence~\citep{Liu2023GEN}. SHE is a feature-space scoring method designed to improve OOD separation by reshaping or reweighting representations at test time~\citep{zhang2023outofdistribution}.
ASH reduces spurious OOD confidence by modifying internal activations at inference time~\citep{djurisic2023extremely}, and SCALE builds on this activation-shaping idea with a more robust scaling strategy~\citep{xu2024scaling}. We also report distance-based baselines: MDS uses class-conditional Mahalanobis distances and MDSEns is its ensemble variant~\citep{lee2018simple}, RMDS is a stronger Mahalanobis-style scoring variant~\citep{DBLP:journals/corr/abs-2106-09022}, and KNN detects OOD using nearest-neighbor distances in feature space~\citep{sun2022nearest}. This selection covers complementary families (confidence, subspace/residual, activation shaping, and distance-based) while keeping the main tables concise; additional OpenOOD baselines and full per-dataset results are provided in~\ref{sec:detailedres}.

We use the two proposed Anomaly Scored, based on the exceedance count (BBAS-EC, Equation~\eqref{eq:EC}) and the exceedance distance (BBAS-ED, Equation~\eqref{eq:ED}), and their Aggregated version (AggBBAS-EC or -ED, Equation~\eqref{eq:AggAS}). Overall, the exceedance distance score BBAS-ED is typically slightly stronger than the exceedance count score BBAS-EC when used as a single class-conditional distance. However, the most consistent performances come from aggregating over class probabilities using AggBBAS-EC or -ED. Tables~\ref{tab:global_perf_farood} and~\ref{tab:global_perf_nearood} indicate that the benefit of aggregation is highly dataset- and shift-dependent.
On CIFAR-10, aggregation produces only marginal changes, suggesting that predictions are typically unambiguous and conditioning on the top-1 class already yields a stable anomaly score.
On CIFAR-100, aggregation is noticeably more helpful in the Near-OOD setting, where semantically similar classes induce higher class confusion. In contrast, the effect is comparatively smaller on Far-OOD, where the single class-conditional score already provides strong separation.
The largest gains are observed on ImageNet-200/1K for both Far-OOD and Near-OOD, consistent with the larger label space and greater visual diversity increasing the brittleness of single-class conditioning.
Overall, the tables trends support the view that aggregation is most beneficial in regimes with higher class ambiguity and covariate shift severity.
% Aggregation has a relatively minor effect when the classifier is already confident, but leads to much larger gains when class uncertainty is higher.

\paragraph{FarOOD setting}
On Far-OOD, we first compare against general post-hoc approaches (MSP/ODIN, VIM, ASH, and SCALE in Table~\ref{tab:global_perf_farood}). In this regime, the aggregated AggBBAS-EC/ED variants show the most consistently strong performance, ranking best on CIFAR-10, CIFAR-100 and ImageNet-200, while remaining competitive on ImageNet-1K where SCALE (followed by ASH) provides the strongest performance among the listed general baselines.
Next, focusing only on the distance-based methods (MDS/MDSEns, RMDS, KNN), the single anomaly score variants BBAS-EC/ED already place ahead of these baselines on CIFAR-10, CIFAR-100, and ImageNet-1K. The one exception is ImageNet-200, where they rank second behind KNN. Using aggregation tends to increase the separation in this distance-based comparison, and it also removes the ImageNet-200 shortfall by taking the top rank there.

\paragraph{Near-OOD setting}
On Near-OOD, we again start with general post-hoc approaches. BBAS remains competitive, and aggregation match or exceed the general baselines on CIFAR-10, and on the larger datasets they usually sit just behind the best general method.
We then compare against distance-based methods. In this comparison the single-score BBAS variants are already in the same top cluster as RMDS/KNN on CIFAR-10, but on CIFAR-100 and ImageNet-200 they place behind them. On ImageNet-1k they place second behind RMDS. Importantly, BBAS with aggregation scores is the top performing method of the distance-based group.

We note that to date, no single method has consistently outperformed its competitors across every dataset and model architecture~\cite{Tajwar2021NoTS}. However, the BBAS method has shown superiority in comparison with the distance-based baseline.

\begin{table*}[!htbp]
  \centering
  \caption{Average performances on FarOOD benchmarks using ResNet architectures. Best performance is in \textbf{bold}, second best is \underline{underlined}. We indicate a $-$ when the result is not provided in the original paper of the method nor in the OpenOOD leader board.}
  \vspace{1mm}
  \label{tab:global_perf_farood}
  \resizebox{\textwidth}{!}{%
    \begin{tabular}{lcccccccc}
      \toprule
      \multirow{2}{*}{\textbf{Method}}              &
      \multicolumn{2}{c}{\textbf{CIFAR-10}}         &
      \multicolumn{2}{c}{\textbf{CIFAR-100}}        &
      \multicolumn{2}{c}{\textbf{ImageNet-200}}     &
      \multicolumn{2}{c}{\textbf{ImageNet-1K}}                                                                  \\
      \cmidrule(lr){2-3} \cmidrule(lr){4-5} \cmidrule(lr){6-7} \cmidrule(lr){8-9}
                                                    & \textbf{FPR95 $\downarrow$} & \textbf{AUROC $\uparrow$} &
      \textbf{FPR95 $\downarrow$}                   & \textbf{AUROC $\uparrow$}   &
      \textbf{FPR95 $\downarrow$}                   & \textbf{AUROC $\uparrow$}   &
      \textbf{FPR95 $\downarrow$}                   & \textbf{AUROC $\uparrow$}                                 \\
      \midrule
      \multicolumn{9}{l}{\textbf{General approach methods}}                                                     \\
      MSP \cite{hendrycks2017baseline}              &
      $ 31.72 $                                     & $ 90.73 $                   &
      $ 58.70 $                                     & $ 77.76 $                   &
      $ 35.43 $                                     & $ 90.13 $                   &
      $ 51.45 $                                     & $ 85.23 $                                                 \\
      ODIN \cite{liang2018enhancing}                &
      $ 57.62 $                                     & $ 87.96 $                   &
      $ 58.86 $                                     & $ 79.28 $                   &
      $ 34.23 $                                     & $ 91.71 $                   &
      $ 43.96 $                                     & $ 89.47 $                                                 \\
      VIM~\cite{Wang_2022_CVPR}                     &
      $ 25.05 $                                     & $ 93.48 $                   &
      $ 50.74 $                                     & $ 81.70 $                   &
      $ 27.20 $                                     & $ 91.26 $                   &
      $ 24.67 $                                     & $ 92.68 $                                                 \\
      GEN~\cite{Liu2023GEN}                         &
      $ 34.73 $                                     & $ 91.35 $                   &
      $ 56.71 $                                     & $ 79.68 $                   &
      $ 32.10 $                                     & $ 91.36 $                   &
      $ 35.61 $                                     & $ 89.76 $                                                 \\
      SHE~\cite{zhang2023outofdistribution}         &
      $ 66.48 $                                     & $ 85.32 $                   &
      $ 64.12 $                                     & $ 76.92 $                   &
      $ 42.17 $                                     & $ 89.81 $                   &
      $ 41.45 $                                     & $ 90.92 $                                                 \\
      ASH \cite{djurisic2023extremely}              &
      $ 79.03 $                                     & $ 78.49 $                   &
      $ 59.20 $                                     & $ 80.58 $                   &
      $ 27.29 $                                     & $ 93.90 $                   &
      $ \underline{19.49} $                         & $ \underline{95.74} $                                     \\
      SCALE \cite{xu2024scaling}                    &
      $ - $                                         & $ 86.39 $                   &
      $ - $                                         & $ 81.42 $                   &
      $ - $                                         & $ 93.98 $                   &
      $ \mathbf{16.53} $                            & $ \mathbf{96.53} $                                        \\

      \multicolumn{9}{l}{\textbf{Distance-based methods}}                                                       \\
      MDS~\cite{lee2018simple}                      &
      $ 32.22 $                                     & $ 89.72 $                   &
      $ 72.26 $                                     & $ 69.39 $                   &
      $ 61.66 $                                     & $ 74.72 $                   &
      $ 62.92 $                                     & $ 74.25 $                                                 \\
      MDSEns~\cite{lee2018simple}                   &
      $ 61.47 $                                     & $ 73.90 $                   &
      $ 66.74 $                                     & $ 66.00 $                   &
      $ 80.96 $                                     & $ 69.27 $                   &
      $ 82.77 $                                     & $ 74.25 $                                                 \\
      RMDS~\cite{DBLP:journals/corr/abs-2106-09022} &
      $ 25.35 $                                     & $ 92.20 $                   &
      $ 52.81 $                                     & $ 82.92 $                   &
      $ 32.45 $                                     & $ 88.06 $                   &
      $ 40.91 $                                     & $ 86.38 $                                                 \\
      KNN~\cite{sun2022nearest}                     &
      $ 24.27 $                                     & $ 92.96 $                   &
      $ 53.65 $                                     & $ 82.40 $                   &
      $ 27.27 $                                     & $ 93.16 $                   &
      $ 34.13 $                                     & $ 90.18 $                                                 \\
      \midrule
      BBAS-EC (Ours)                                &
      $ 20.02 $                                     & $ 94.47 $                   &
      $ \mathbf{41.42} $                            & $ \underline{88.03} $       &
      $ 33.01 $                                     & $ 91.52 $                   &
      $ 34.81 $                                     & $ 91.83 $                                                 \\
      BBAS-ED (Ours)                                &
      $ \mathbf{17.62} $                            & $ \mathbf{95.15} $          &
      $ 42.45 $                                     & $ 87.26 $                   &
      $ 33.68 $                                     & $ 91.35 $                   &
      $ 36.77 $                                     & $ 91.20 $                                                 \\
      AggBBAS-EC (Ours)                             &
      $ \underline{20.09} $                         & $ \underline{94.50} $       &
      $ \underline{42.19} $                         & $ \mathbf{88.65} $          &
      $ \underline{22.50} $                         & $ \underline{94.83} $       &
      $ 22.39 $                                     & $ 95.11 $                                                 \\
      AggBBAS-ED (Ours)                             &
      $ 18.02$                                      & $ 95.08 $                   &
      $ 43.45 $                                     & $ 87.55 $                   &
      $ \mathbf{21.97} $                            & $ \mathbf{95.07} $          &
      $ 22.68 $                                     & $ 95.13 $                                                 \\
      \bottomrule
    \end{tabular}
  }
\end{table*}

\begin{table*}[htbp]
  \centering
  \caption{Average performances on NearOOD benchmarks using ResNet architectures. Best performance is in \textbf{bold}, second best is \underline{underlined}. We indicate a $-$ when the result is not provided in the original paper of the method nor in the OpenOOD leader board.}
  \vspace{1mm}
  \label{tab:global_perf_nearood}
  \resizebox{\textwidth}{!}{%
    \begin{tabular}{lcccccccc}
      \toprule
      \multirow{2}{*}{\textbf{Method}}              &
      \multicolumn{2}{c}{\textbf{CIFAR-10}}         &
      \multicolumn{2}{c}{\textbf{CIFAR-100}}        &
      \multicolumn{2}{c}{\textbf{ImageNet-200}}     &
      \multicolumn{2}{c}{\textbf{ImageNet-1K}}                                                                  \\
      \cmidrule(lr){2-3} \cmidrule(lr){4-5} \cmidrule(lr){6-7} \cmidrule(lr){8-9}
                                                    & \textbf{FPR95 $\downarrow$} & \textbf{AUROC $\uparrow$} &
      \textbf{FPR95 $\downarrow$}                   & \textbf{AUROC $\uparrow$}   &
      \textbf{FPR95 $\downarrow$}                   & \textbf{AUROC $\uparrow$}   &
      \textbf{FPR95 $\downarrow$}                   & \textbf{AUROC $\uparrow$}                                 \\
      \midrule
      \multicolumn{9}{l}{\textbf{General approach methods}}                                                     \\
      MSP~\cite{hendrycks2017baseline}              &
      $ 48.17 $                                     & $ 88.03 $                   &
      $ 54.80 $                                     & $ 80.27 $                   &
      $ 54.82 $                                     & $ 83.34 $                   &
      $ 65.68 $                                     & $ 76.02 $                                                 \\
      ODIN~\cite{liang2018enhancing}                &
      $ 76.19 $                                     & $ 82.87 $                   &
      $ 57.91 $                                     & $ 79.90 $                   &
      $ 66.76 $                                     & $ 80.27 $                   &
      $ 72.50 $                                     & $ 74.75 $                                                 \\
      VIM~\cite{Wang_2022_CVPR}                     &
      $ 44.84 $                                     & $ 88.68 $                   &
      $ 62.63 $                                     & $ 74.98 $                   &
      $ 59.19 $                                     & $ 78.68 $                   &
      $ 71.35 $                                     & $ 72.08 $                                                 \\
      GEN~\cite{Liu2023GEN}                         &
      $ 53.67 $                                     & $ 88.20 $                   &
      $ 54.42 $                                     & $ 81.31 $                   &
      $ 55.20 $                                     & $ 83.68 $                   &
      $ 65.32 $                                     & $ 76.85 $                                                 \\
      SHE~\cite{zhang2023outofdistribution}         &
      $ 79.65 $                                     & $ 81.54 $                   &
      $ 59.07 $                                     & $ 78.95 $                   &
      $ 66.80 $                                     & $ 80.18 $                   &
      $ 73.01 $                                     & $ 73.78 $                                                 \\
      ASH~\cite{djurisic2023extremely}              &
      $ 86.78 $                                     & $ 75.27 $                   &
      $ 65.71 $                                     & $ 78.20 $                   &
      $ 64.89 $                                     & $ 82.38 $                   &
      $ \underline{63.32} $                         & $ 78.17 $                                                 \\
      SCALE~\cite{xu2024scaling}                    &
      $ - $                                         & $ 84.84 $                   &
      $ - $                                         & $ \mathbf{80.99} $          &
      $ - $                                         & $ \mathbf{84.84} $          &
      $ \mathbf{59.76} $                            & $ \mathbf{81.36} $                                        \\

      \multicolumn{9}{l}{\textbf{Distance-based methods}}                                                       \\
      MDS~\cite{lee2018simple}                      &
      $ 49.90 $                                     & $ 84.20 $                   &
      $ 83.53 $                                     & $ 58.69 $                   &
      $ 79.11 $                                     & $ 61.93 $                   &
      $ 85.45 $                                     & $ 55.44 $                                                 \\
      MDSEns~\cite{lee2018simple}                   &
      $ 92.96 $                                     & $ 60.43 $                   &
      $ 95.88 $                                     & $ 46.31 $                   &
      $ 91.75 $                                     & $ 54.32 $                   &
      $ 93.52 $                                     & $ 49.67 $                                                 \\
      RMDS~\cite{DBLP:journals/corr/abs-2106-09022} &
      $ 38.89 $                                     & $ 89.80 $                   &
      $ \mathbf{55.46} $                            & $ 80.15 $                   &
      $ \mathbf{54.02} $                            & $ 82.57 $                   &
      $ 65.04 $                                     & $ 76.99 $                                                 \\
      KNN~\cite{sun2022nearest}                     &
      $ 34.01 $                                     & $ 90.64 $                   &
      $ 61.22 $                                     & $ 80.18 $                   &
      $ 60.18 $                                     & $ 81.57 $                   &
      $ 70.87 $                                     & $ 71.10 $                                                 \\
      \midrule
      BBAS-EC (Ours)                                &
      $ 34.20 $                                     & $ 90.67 $                   &
      $ 60.74 $                                     & $ 77.45 $                   &
      $ 64.14 $                                     & $ 79.91 $                   &
      $ 71.90 $                                     & $ 75.54 $                                                 \\
      BBAS-ED (Ours)                                &
      $ 34.11 $                                     & $ 90.70 $                   &
      $ 60.58 $                                     & $ 77.34 $                   &
      $ 63.64 $                                     & $ 79.93 $                   &
      $ 73.77 $                                     & $ 74.31 $                                                 \\
      AggBBAS-EC (Ours)                             &
      $ \mathbf{34.00} $                            & $ \underline{90.74} $       &
      $ 58.23 $                                     & $ 79.96 $                   &
      $ 58.43 $                                     & $ 83.58 $                   &
      $ 66.17 $                                     & $ \underline{78.60} $                                     \\
      AggBBAS-ED (Ours)                             &
      $ \underline{34.03} $                         & $ \mathbf{90.76} $          &
      $ \underline{57.26} $                         & $ \underline{80.26} $       &
      $ \underline{57.46} $                         & $ \underline{84.09} $       &
      $ 68.03 $                                     & $ 77.90 $                                                 \\
      \bottomrule
    \end{tabular}
  }
\end{table*}

\subsection{Extension to transformer architectures}

Although the paper is mainly centered on convolutional architectures, the method itself is not restricted on these types of architectures. To apply BBAS to any architecture, we need only to decide on the clustering variables and the monitoring variables. To show this, we evaluated BBAS on transformer architectures (ViT-B/16~\citep{dosovitskiy2021imageworth16x16words} and Swin-T~\citep{liu2021swintransformerhierarchicalvision}) using the same previously specified hyper-parameters and choosing the output of their last encoder block as both monitoring and clustering variables. We follow the OpenOOD evaluation protocol exactly, using the official torchvision checkpoints~\citep{10.1145/1873951.1874254}.
The results of SCALE~\cite{xu2024scaling} are not available for transformer architectures. Extended results are provided in the supplementary material.

In the transformer setting, our BBAS variants remain consistently strong, with aggregation providing the most reliable gains. In particular, AggBBAS-EC is the most stable choice and delivers the strongest overall behavior in the Near-OOD regime across both ViT and Swin, while AggBBAS-ED is the most competitive BBAS variant for Far-OOD.

When compared to broader post-hoc baselines, the trends are also clear: on Far-OOD, stronger general-purpose methods (especially VIM) remain very competitive, and distance-based approaches form a strong upper tier. Our aggregated BBAS scores fall into this top group and are close to the best-performing baselines.
On Near-OOD, where many standard post-hoc methods degrade more noticeably, our single-score BBAS variants are already top-performing relative to the baseline methods and our aggregated variants separate more cleanly.

Our method demonstrates strong performance on these architectures, placing it among the best methods for the FarOOD tests and by far the top-performing method for NearOOD detection on transformers.
\begin{table*}
  \centering
  \caption{Results for Transformer tests on ImageNet-1k (AUROC). Best performance is in \textbf{bold}, second best is \underline{underlined}.}
  \vspace{1mm}
  \label{tab:transformers}
  \scriptsize
  \begin{tabular}{lcccc}
    \toprule
    \multirow{2}{*}{\textbf{Method}}              &
    \multicolumn{2}{c}{\textbf{FarOOD}}           &
    \multicolumn{2}{c}{\textbf{NearOOD}}                                                                                                          \\
    \cmidrule(lr){2-3} \cmidrule(lr){4-5}
                                                  & \textbf{ViT-B/16}     & \textbf{Swin-T}       & \textbf{ViT-B/16}     & \textbf{Swin-T}       \\
    \midrule
    \multicolumn{5}{l}{\textbf{General approach methods}}                                                                                         \\
    MSP~\cite{hendrycks2017baseline}              & $ 86.04 $             & $ 86.30 $             & $ 73.52 $             & $ 76.75 $             \\
    ODIN~\cite{liang2018enhancing}                & $ - $                 & $ 73.29 $             & $ - $                 & $ 68.15 $             \\
    VIM~\cite{Wang_2022_CVPR}                     & $ \mathbf{92.84} $    & $ \mathbf{93.20} $    & $ 77.03 $             & $ 75.24 $             \\
    SHE~\cite{zhang2023outofdistribution}         & $ 92.42 $             & $ 89.84 $             & $ 76.11 $             & $ 76.59 $             \\
    GEN~\cite{Liu2023GEN}                         & $ 91.35 $             & $ 90.98 $             & $ 76.30 $             & $ 78.97 $             \\
    ASH~\cite{djurisic2023extremely}              & $ 51.56 $             & $ 44.64 $             & $ 53.21 $             & $ 46.47 $             \\

    \multicolumn{5}{l}{\textbf{Distance-based methods}}                                                                                           \\
    MDS~\cite{lee2018simple}                      & $ \underline{92.60} $ & $ 91.49 $             & $ 79.04 $             & $ 75.18 $             \\
    RMDS~\cite{DBLP:journals/corr/abs-2106-09022} & $ \underline{92.60} $ & $ 92.34 $             & $ 80.09 $             & $ 78.34 $             \\
    KNN~\cite{sun2022nearest}                     & $ 90.81 $             & $ 89.37 $             & $ 74.11 $             & $ 71.62 $             \\
    \midrule
    \multicolumn{5}{l}{\textbf{Ours (distance-based)}}                                                                                            \\
    BBAS-EC (Ours)                                & $ 90.46 $             & $ 90.76 $             & $ \underline{81.97} $ & $ 81.31 $             \\
    BBAS-ED (Ours)                                & $ 91.20 $             & $ 91.29 $             & $ 81.75 $             & $ 80.97 $             \\
    AggBBAS-EC (Ours)                             & $ 92.02 $             & $ 92.98 $             & $ \mathbf{82.33} $    & $ \mathbf{83.02} $    \\
    AggBBAS-ED (Ours)                             & $ 92.14 $             & $ \underline{93.03} $ & $ 81.20 $             & $ \underline{82.33} $ \\
    \bottomrule
  \end{tabular}
  \normalsize
\end{table*}

\section{Discussion}
\label{sec:discussion}

Our method introduces a limited number of tunable parameters. As demonstrated in the ablation study in Section~\ref{sec:ablation}, these parameters have a minimal impact on the overall performance. Even suboptimal choices still yield relevant results in the experiments. The stability of BBAS with respect to its parameters allowed us to forgo the validation data provided by the OpenOOD framework for tuning purposes. The performances are close enough that there is no objective way to decide the best choice of parameters. The motivation behind the choice of the number of clusters as the square root of the number of examples and the complete linkage in clustering is to obtain clusters that are the most spherical, without having a large number of clusters. This is particularly relevant for CIFAR10 as shown on Figure 3.

Nevertheless, BBAS is built upon several key design choices, primarily the selection of monitoring variables, clustering features, the clustering method, and the OOD score. Other choices for the aforementioned design decisions still yield relevant performance, suggesting that the method is robust to variations in its configuration. Moreover, exploring alternative strategies, such as adaptive weighting schemes for monitoring variables or more sophisticated clustering techniques, may further enhance detection capabilities and improve the overall stability of the approach.

Finally, while this work and associated experiments focused on classification tasks, our method can be applied to other deep learning problems, such as detecting OOD samples in regression tasks. In the case of regression, the main adaptation consists in considering only one set of data and not extracting each class individually.

\section{Conclusion}

This paper introduces BBAS, a novel deep learning framework for out-of-distribution detection that leverages monitoring variables derived from hidden features of the neural network. By integrating a clustering approach, BBAS refines bounding box coverage to enhance detection. Additionally, we propose heuristics for computing an OOD score that quantifies the risk of a new input being out-of-distribution.

BBAS has been successfully applied to multiple convolutional neural network architectures trained for image classification. We conducted experiments on various OOD detection benchmarks and demonstrated that our method successfully distinguishes OOD samples from in-distribution data, particularly in scenarios involving covariate shift. The performances are competitive against the leading state-of-the-art post-hoc methods, achieving the best performances on CIFAR datasets, and consistent satisfying results on ImageNet datasets.

Future work could explore refining the method through another choice of monitoring variables, clustering features, and clustering algorithms. There are also other variations that can be explored in the Anomaly Score definition, such as adding a weighting scheme to each bounding box interval that takes into account the importance of the channel to the prediction.
Furthermore, integrating this approach into unsupervised learning frameworks, including anomaly detection and active learning~\citep{ren2021surveydeepactivelearning}, could further increase its applicability.
This work can pave the way to further research into the bounding box framework in OOD detection and other adjacent tasks, as understanding the bounding box abstraction could be a valuable paradigm to more reliable and safe deep learning.

\clearpage
%% For citations use: 
%%       \cite{<label>} ==> [1]

%%

%% If you have bib database file and want bibtex to generate the
%% bibitems, please use
%%
\small
\bibliographystyle{elsarticle-harv}
\bibliography{elsarticle-template-num.bbl}
\normalsize
%% else use the following coding to input the bibitems directly in the
%% TeX file.

%% Refer following link for more details about bibliography and citations.
%% https://en.wikibooks.org/wiki/LaTeX/Bibliography_Management

% \begin{thebibliography}{00}

% %% For numbered reference style
% %% \bibitem{label}
% %% Text of bibliographic item

% \bibitem{lamport94}
%   Leslie Lamport,
%   \textit{\LaTeX: a document preparation system},
%   Addison Wesley, Massachusetts,
%   2nd edition,
%   1994.

% \end{thebibliography}

\clearpage
%% The Appendices part is started with the command \appendix;
%% appendix sections are then done as normal sections
\appendix

\section{Runtime costs}
\label{app:runtime_costs}

We report in Table~\ref{tab:comput-cost} the runtime costs of the main components of the proposed pipeline on CIFAR-100.

\begin{table}[!h]
  \centering
  \begin{tabular}{lc}
    \toprule
    \textbf{Component}                                 & \textbf{Time (in s)} \\
    \midrule
    \textit{Bounding-box construction}                 & \textit{60}          \\
    \midrule
    Model inference                                    & 46                   \\
    Clustering                                         & 14                   \\
    Bounding-box calculation                           & $<0.1$               \\
    \midrule
    \textit{Test-time post-processing (whole dataset)} & \textit{4.8}         \\
    \midrule
    Model inference                                    & 4                    \\
    Single anomaly score calculation                   & 0.8                  \\
    \bottomrule
  \end{tabular}
  \caption{Runtime costs of the main components of the proposed pipeline on CIFAR-100, that are \textit{Bounding-box construction} and \textit{Test-time post-processing on the whole dataset}. Breakdown times into sub-components is reported under each.}
  \label{tab:comput-cost}
\end{table}

% \paragraph{Bounding-box construction}
% The total runtime is approximately 60\,s, with the following breakdown:
% \begin{itemize}
%     \item Model inference: 46\,s
%     \item Clustering: 14\,s
%     \item Bounding-box calculation: $<0.1$\,s
% \end{itemize}

% \paragraph{Test-time post-processing on the whole dataset}
% The total runtime is 4.8\,s, with the following breakdown:
% \begin{itemize}
%     \item Model inference: 4\,s
%     \item Single anomaly score calculation: 0.8\,s
% \end{itemize}

\section{On the geometric structure of neural network bounding box}
\label{app:bb_geometry}

\paragraph{Notations} Feed-forward neural networks are a special type of parameterized functions that takes an input and produces an output:
\[
  \mathcal{N} : X \rightarrow  Y.
\]
Typically, we have
\begin{equation*}
  X = \mathbb{R}^{n_{\mathrm{in}}} \text{  and } Y = \mathbb{R}^{n_{\mathrm{out}}},
\end{equation*}
where $n_{\mathrm{in}}$ is the dimension of the input and $n_{\mathrm{out}}$ is the dimension of the output.

A  $k$-layers neural network $\mathcal{N}_k$ is defined as a composition of functions $f^{(\ell)}$ associated to different layers $\ell$ containing $n^{(\ell)}$ neurons:

\begin{align*}
  \mathcal{N}_k(x)         & =g \circ f^{(k)} \circ ... \circ f^{(2)} \circ f^{(1)}(x), \\
  f^{(\ell)}(z^{(\ell-1)}) & =\sigma(W^{(\ell)}z^{(\ell-1)}+b^{(\ell)}),                \\
  z^{(\ell)}               & = f^{(\ell)}(z^{(\ell-1)}),                                \\
  z^{(0)}                  & = x,
\end{align*}
where \(b^{(\ell)}\) is a bias column vector in $\mathbb{R}^{n^{(\ell)}}$, \(W^{(\ell)}\) is the $n^{(\ell)} \times n^{(\ell-1)}$ weight matrix, $z^{(\ell-1)}$ is the input of layer $\ell$. $\sigma$ is the activation function of the intermediate layers (in this work, it is the same for all of the intermediate layers), and $g$ is the output activation function, it can be different from the activation function of the intermediate layers.
Each function $f^{(\ell)}$ maps its inputs from $\mathbb{R}^{n^{(\ell-1)}}$  into $\mathbb{R}^{n^{(\ell)}}$:
\[
  f^{(\ell)} : \mathbb{R}^{n^{(\ell-1)}} \rightarrow \mathbb{R}^{n^{(\ell)}}.
\]

In this section, we will look at the behaviour of the $\relu$ activation function. To simplify, we adopt the same notation for $\sigma: \mathbb{R} \rightarrow \mathbb{R} $ and $\sigma: \mathbb{R}^{n} \rightarrow \mathbb{R}^{n} $ (element-wise function) such as for $ x \in \mathbb{R}^{n} $,
\[
  \sigma(x)  = (\sigma(x_i))_{i=1,...,n}.
\]
Each layer $f^{(\ell)}$ is composed of $n^{(\ell)}$ neurons, each neuron $i$ can be represented as function  $f^{(\ell)}_{i} : \mathbb{R}^{n^{(\ell-1)}} \rightarrow \mathbb{R} $  such as
\[
  f^{(\ell)}_{i}(z^{(\ell-1)})=\sigma(W^{(\ell)}_{i}z^{(\ell-1)}+b^{(\ell)}_{i}),
\]
where $W^{(\ell)}_{i}$ is the $i^{th}$ line of $W^{(\ell)}$.

The bounding box abstraction defines an operational domain through the hidden features of the neural network. The idea is that the hidden features are all continuous functions from the input space to the set of real numbers. So, defining upper and lower bounds on this set of functions to construct a bounding box implies defining a domain in the input space representing the intersection of a series of sub-level and super-level sets.

When an instance propagates through a neural network, each neuron takes the previous layer outcome and linearly transforms it before passing it through an activation function. We can consider this series of linear transformations as a mapping that we use to define a bounding box over some point set.

\begin{definition}(Neural network transformation)
  Let $\mathcal{N}$ be a neural network with fixed trainable parameters $\theta$, the transformation $\phi_\theta: \mathbb{R}^{n_{\mathrm{in}}} \rightarrow \mathbb{R}^{\#\mathrm{neurons}} $ is defined as:
  \begin{equation}
    \phi_\theta(x) = (\phi_{\theta,i}^{(\ell)}(x))_{i,\ell} = (W^{(\ell)}_{i}z^{(\ell-1)}(x)+b^{(\ell)}_{i})_{i,\ell}.
  \end{equation}
\end{definition}
\begin{definition}(Neural network bounding box)
  \label{def:BB_def_app}
  Let $\mathcal{N}$ be a neural network, a bounding box over a finite point set $A \subset \mathbb{R}^{n_{\mathrm{in}}}$ and for a fixed vector of trainable parameters $\theta$ is
  \begin{equation*}
    \mathcal{B}(A, \theta) := \prod_{\ell,i} \left[\min\limits_{x \in \;A}(\phi_{\theta,i}^{(\ell)}(x)),\; \max\limits_{x \in \;A}(\phi_{\theta,i}^{(\ell)}(x))\right].
  \end{equation*}
  The region of space delimited by $\mathcal{B}(A, \theta)$ is
  \begin{equation*}
    \mathcal{R}_{\mathcal{B}}(A, \theta) := \{x \in \mathbb{R}^{n_{\mathrm{in}}} \; | \phi_\theta(x) \; \in \mathcal{B}(A, \theta) \}.
  \end{equation*}
\end{definition}
\begin{definition}(Single layer bounding box)
  The bounding box over a layer $l$ of $\mathcal{N}$ over a finite point set $A \subset \mathbb{R}^{n_{\mathrm{in}}}$ and for a fixed vector of trainable parameters $\theta$ is
  \begin{equation*}
    \mathcal{B}^{(\ell)}(A, \theta) := \prod_{i} \left[\min\limits_{x \in \;A}(\phi_{\theta,i}^{(\ell)}(x)),\; \max\limits_{x \in \;A}(\phi_{\theta,i}^{(\ell)}(x))\right].
  \end{equation*}
  The region of space delimited by $\mathcal{B}^{(\ell)}(A, \theta)$ is
  \begin{equation*}
    \mathcal{R}^{(\ell)}_{\mathcal{B}}(A, \theta) := \{x \in \mathbb{R}^{n_{\mathrm{in}}} \; | \phi^{(\ell)}_\theta(x) \; \in \mathcal{B}^{(\ell)}(A, \theta) \}.
  \end{equation*}
\end{definition}

The shape of $\mathcal{R}_{\mathcal{B}}(A, \theta)$ depends on the architecture of the neural network and on the parameters $\theta$. In this appendix, we will analyse the resulting shape of bounding box in the input space.

\subsection{Bounding box over the first layer}

The shape of the region corresponding to the bounding box over the first layer can be explicitly defined.
\begin{lemma}
  \label{firstlayer}
  For $\mathcal{N}$ a neural network with $\relu$ activation on the first layer and $A \subset \mathbb{R}^{n_{\mathrm{in}}}$ a finite point set, we have:
  \begin{equation*}
    \mathcal{R}^{(1)}_{\mathcal{B}}(A, \theta) \text{ is a polytope} \Longleftrightarrow  \rank(W^{(1)})=n_{\mathrm{in}}.
  \end{equation*}
\end{lemma}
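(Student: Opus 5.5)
The plan rests on one observation: by the definition of the neural network transformation, the first-layer monitored features are the \emph{pre-activations} $\phi^{(1)}_{\theta,i}(x) = W^{(1)}_i x + b^{(1)}_i$, since $z^{(0)} = x$. The $\relu$ therefore plays no role in the shape of $\mathcal{R}^{(1)}_{\mathcal{B}}(A,\theta)$. Write $L_i = \min_{a\in A}\phi^{(1)}_{\theta,i}(a)$ and $U_i = \max_{a\in A}\phi^{(1)}_{\theta,i}(a)$, assuming $A$ is nonempty so that these bounds exist. Then
\begin{equation*}
  \mathcal{R}^{(1)}_{\mathcal{B}}(A,\theta) = \{x\in\mathbb{R}^{n_{\mathrm{in}}} \mid L_i \le W^{(1)}_i x + b^{(1)}_i \le U_i,\ i=1,\dots,n^{(1)}\}.
\end{equation*}
This is a finite intersection of closed half-spaces, hence a convex polyhedron. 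It is nonempty because it contains $A$. I will use the standard convention that a polytope is a bounded polyhedron, so both directions reduce to showing that this polyhedron is bounded if and only if $W^{(1)}$ has full column rank $n_{\mathrm{in}}$.

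For ($\Leftarrow$), assume $\rank(W^{(1)}) = n_{\mathrm{in}}$. Then $W^{(1)}$ is injective, so there is a constant $c>0$ (the smallest singular value) with $\|W^{(1)}y\|_2 \ge c\|y\|_2$ for all $y$. Fix $x_0\in A$. For any $x$ in the region, both $W^{(1)}x+b^{(1)}$ and $W^{(1)}x_0+b^{(1)}$ lie in the box $\prod_i[L_i,U_i]$. Hence $\|W^{(1)}(x-x_0)\|_2 \le \|U-L\|_2$, which gives $\|x-x_0\|_2 \le \|U-L\|_2/c$. The region is therefore bounded, and so it is a polytope.

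For ($\Rightarrow$), I argue by contraposition. If $\rank(W^{(1)}) < n_{\mathrm{in}}$, pick a nonzero $v\in\ker W^{(1)}$ and $x_0\in A$. For every $t\in\mathbb{R}$, $W^{(1)}(x_0+tv)+b^{(1)} = W^{(1)}x_0+b^{(1)}$, which lies in the box. So the whole line $x_0+\mathbb{R}v$ is contained in the region, the region is unbounded, and it is not a polytope.

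The argument itself is short; the main obstacle is conventions rather than computation. One must make explicit that the bounded features are pre-activations, as the definition of $\phi_\theta$ states; otherwise the post-$\relu$ values would make the region non-convex in general. One must also fix "polytope" to mean a bounded polyhedron, which allows degenerate cases such as a single point when $A$ is a singleton, and assume $A\neq\emptyset$. I would state these conventions at the start of the proof.
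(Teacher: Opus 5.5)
Your proof is correct and follows the paper's argument: both reduce the question to boundedness of the polyhedron $\{x : L_i \le W^{(1)}_i x + b^{(1)}_i \le U_i\}$. Both then use a nonzero kernel vector of $W^{(1)}$ to place a whole line in the region for ($\Rightarrow$), and the smallest singular value of $W^{(1)}$ to obtain a uniform norm bound for ($\Leftarrow$). The only differences are cosmetic. You bound $\|x - x_0\|_2$ around a point $x_0 \in A$ using the box widths $\|U-L\|_2$, whereas the paper bounds $\|x\|_2$ directly through the constant $M$. You also state explicitly that the region is nonempty because it contains $A$; the paper uses this tacitly when it picks $x_0$ in the region.
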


We note that the stated condition of full-rank is almost always true since the number of neurons of the first layer is often significantly higher than the input dimension.

\begin{proof}

  By definition
  \begin{equation}
    \label{eq:firstlayer-box}
    \mathcal R^{(1)}_{\mathcal B}(A,\theta)
    =\{x\in \mathbb{R}^{n_{\mathrm{in}}} \mid  \min\limits_{x \in \;A}(\phi_{\theta,i}^{(1)}(x))-b_i^{(1)}\le W_i^{(1)}x\le \max\limits_{x \in \;A}(\phi_{\theta,i}^{(1)}(x))-b_i^{(1)}\}.
  \end{equation}
  The right–hand side is an intersection of \(2n^{(1)}\) closed half-spaces, hence always a
  convex polyhedron. It is a \emph{polytope} precisely when it is bounded. We demonstrate below that this region is indeed bounded if and only if \(\rank(W^{(1)}) = n_{\mathrm{\mathrm{in}}}\).

    {\(\Longrightarrow)\)} Assume that \(\rank(W^{(1)})<n_{\mathrm{\mathrm{in}}}\).
  Then there is a non-zero vector \(v\in\ker W^{(1)}\).
  Pick any \(x_0\in\mathcal R^{(1)}_{\mathcal B}(A,\theta)\) and note that
  for every \(t\in\mathbb{R}\),
  \[
    \phi_{\theta}^{(1)}(x_0+tv)=W^{(1)}(x_0+tv)+b^{(1)}
    =W^{(1)}x_0+b^{(1)}\in\mathcal B .
  \]
  Thus \(x_0+tv\in \mathcal R^{(1)}_{\mathcal B}(A,\theta)\) for all \(t\in\mathbb{R}\),
  so the region contains an entire affine line and is unbounded.
  Consequently, if \(\mathcal R^{(1)}_{\mathcal B}(A,\theta)\) is a polytope,
  we must have \(\rank(W^{(1)})=n_{\mathrm{\mathrm{in}}}\).

    {\(\Longleftarrow)\)} Conversely, suppose \(\rank(W^{(1)})=n_{\mathrm{in}}\),
  so \(W^{(1)}\) is injective.
  Let \(\sigma_{\min}>0\) be its smallest singular value. Then:
  \[
    \|W^{(1)}x\|_2\;\ge\;\sigma_{\min}\|x\|_2\qquad\forall\,x\in\mathbb{R}^{n_{\mathrm{in}}}.
  \]
  Let:
  \[
    M:=\max_{i=1,\ldots,n^{(1)}}\max\bigl\{\,|\min\limits_{x \in \;A}(\phi_{\theta,i}^{(1)}(x))-b^{(1)}_i|,
    \,|\max\limits_{x \in \;A}(\phi_{\theta,i}^{(1)}(x))-b^{(1)}_i|\,\bigr\},
  \]
  so that for every \(x\) satisfying \eqref{eq:firstlayer-box}, \(\|W^{(1)}x\|_2\le\sqrt{n^{(1)}}\,M\). Note that $M$ is correctly defined because $A$ is a finite set.
  Combining with the inequality above gives
  \[
    \|x\|_2\;\le\;\frac{\sqrt{n^{(1)}}\,M}{\sigma_{\min}},
  \]
  a uniform bound for all \(x\in\mathcal R^{(1)}_{\mathcal B}(A,\theta)\).
  Hence the set is bounded, and being an intersection of finitely
  many half-spaces, it is indeed a polytope.
\end{proof}

\subsection{Activation Patterns and Bounding Boxes}
\label{sec:act_patterns}

A neural network that uses a piece-wise linear activation function like $\relu$ corresponds actually to a piece-wise linear function. Therefore, for a local region of space where all the activation functions are linear, the entire neural network will be subsequently locally linear. For a neural network with a $\relu$ activation function, each neuron either “turns on” or “turns off” depending on whether the preactivation output is positive or non-positive. Consequently, a combinatorial pattern of active units can characterize the entire network, providing a unique “signature” for a specific input or class of inputs, referred to as the \textit{activation pattern}.

\begin{definition}
  (Activation Patterns/Regions)
  Let $\mathcal{N}$ be a neural network with $\relu$ activation and an identity output activation. An activation pattern for $\mathcal{N}$ is the assignment of a bit to each neuron that characterizes its state (active or inactive):
  \begin{equation*}\label{eq:activ-pattern}
    a := (a^{(\ell)}_i) \in \mathcal{A}:=\{0, 1\}^{\#\mathrm{neurons}}.
  \end{equation*}
  For fixed trainable parameters $\theta$, there is a mapping $\Phi_{\theta}: \mathbb{R}^{n_{\mathrm{in}}}  \rightarrow  \mathcal{A}$, such as
  \begin{equation*}
    \Phi_{\theta}(x) := a
    \quad \text{with} \quad a^{(\ell)}_i=
    \left\{
    \begin{array}{rcl}
      0 & \quad \text{if} \quad z_i^{(\ell-1)} \leq 0 \\ 1 & \quad \text{if} \quad z_i^{(\ell-1)} > 0 .
    \end{array}
    \right.
  \end{equation*}
  The activation region corresponding to $a$ and $\theta$ is
  \begin{equation*}
    \mathcal{R}(a, \theta) := \Phi_\theta^{-1}(a) = \{x \in \mathbb{R}^{n_{\mathrm{in}}} | \Phi_{\theta}(x)=a \}.
  \end{equation*}
\end{definition}

Understanding the activation regions within a neural network is essential for understanding the geometric structure and overall behavior of the function it represents. Specifically, these activation regions determine the shape and segmentation of the neural network's functional domain across its multiple layers.

The activation region is a convex polyhedron resulting from the intersection of half-spaces defined by the linear inequalities representing the neurons' activation states.

\begin{lemma}(Convexity of activation regions)
  Let $\mathcal{N}$ be a $\relu$ neural network. Then every activation region $\mathcal{R}(a, \;\theta)$, with an activation pattern $a$ and a vector of trainable parameters $\theta$, is a convex polyhedron.
\end{lemma}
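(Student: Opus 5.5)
The plan is an induction on depth. The goal is to show that, once the pattern $a$ is fixed, every preactivation is an affine function of $x$ on the region $\mathcal{R}(a,\theta)$, and that membership in the region is the conjunction of finitely many affine sign conditions on $x$. The region is then a finite intersection of half-spaces, hence a convex polyhedron.

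\textbf{Step 1: an auxiliary affine network.} For a pattern $a\in\mathcal{A}$, let $D^{(\ell)}_a=\mathrm{diag}(a^{(\ell)}_1,\dots,a^{(\ell)}_{n^{(\ell)}})$. Define recursively $\tilde z^{(0)}(x)=x$ and $\tilde z^{(\ell)}(x)=D^{(\ell)}_a\bigl(W^{(\ell)}\tilde z^{(\ell-1)}(x)+b^{(\ell)}\bigr)$. Each $\tilde z^{(\ell)}$ is a composition of affine maps, so it is affine in $x$. The same holds for $\tilde\phi^{(\ell)}_i(x)=W^{(\ell)}_i\tilde z^{(\ell-1)}(x)+b^{(\ell)}_i$, which can be written $\tilde\phi^{(\ell)}_i(x)=\alpha^{(\ell)}_i\cdot x+\beta^{(\ell)}_i$ with coefficients depending only on $a$ and $\theta$.

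\textbf{Step 2: the equality}
\begin{equation*}
\mathcal{R}(a,\theta)=P_a:=\bigcap_{\ell,i}\Bigl\{x \;\Bigm|\; \tilde\phi^{(\ell)}_i(x)>0 \text{ if } a^{(\ell)}_i=1,\ \ \tilde\phi^{(\ell)}_i(x)\le 0 \text{ if } a^{(\ell)}_i=0\Bigr\}.
\end{equation*}
The sign convention is read off the definition as the sign of the preactivation $\phi^{(\ell)}_{\theta,i}(x)$ of neuron $i$ in layer $\ell$.

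For $\subseteq$, take $x$ with $\Phi_\theta(x)=a$ and prove by induction on $\ell$ that $z^{(\ell)}(x)=\tilde z^{(\ell)}(x)$. If this holds at layer $\ell-1$, the preactivations coincide, $\phi^{(\ell)}_{\theta,i}(x)=\tilde\phi^{(\ell)}_i(x)$. Since $\relu(t)=t$ when $t>0$ and $\relu(t)=0$ when $t\le 0$, the sign pattern $a^{(\ell)}$ gives exactly $\relu(\phi^{(\ell)}_{\theta}(x))=D^{(\ell)}_a\phi^{(\ell)}_\theta(x)$, which closes the induction. The sign conditions defining $P_a$ are then exactly $\Phi_\theta(x)=a$ rewritten through $\tilde\phi$.

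For $\supseteq$, take $x\in P_a$ and run the same induction layer by layer. Because $x$ satisfies the affine sign conditions of layer $\ell$, the true preactivation at layer $\ell$, which equals $\tilde\phi^{(\ell)}(x)$ by the inductive hypothesis, has sign pattern $a^{(\ell)}$. Hence $z^{(\ell)}=\tilde z^{(\ell)}$, and at the end $\Phi_\theta(x)=a$.

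\textbf{Step 3: conclusion.} Each constraint $\alpha\cdot x+\beta\le 0$ is a closed half-space. Each constraint $\alpha\cdot x+\beta>0$ is an open half-space, or the empty set or all of $\mathbb{R}^{n_{\mathrm{in}}}$ when $\alpha=0$. A finite intersection of such sets is convex, and it is a polyhedron in the sense of finitely many linear inequalities, some of them strict, which is the sense used in this appendix.

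\textbf{Main obstacle.} The delicate point is the inclusion $P_a\subseteq\mathcal{R}(a,\theta)$. The affine constraints are written with the surrogate $\tilde z$ rather than the true $z$, so the argument must be carried out layer by layer, with the inductive hypothesis justifying the substitution at each stage. A second, lesser subtlety is the strict inequalities: if "polyhedron" is required to mean a closed one, I would note that the closure of $\mathcal{R}(a,\theta)$, obtained by replacing $>$ with $\ge$, is a closed convex polyhedron whenever the region is nonempty.
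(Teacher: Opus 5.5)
Your proof is correct and follows the same idea as the paper. The paper simply asserts that the region is the intersection of the half-spaces given by the neurons' activation-state inequalities, and your surrogate affine network with the two-sided layer-by-layer induction supplies the justification it omits (that these inequalities are affine in $x$ once the pattern is fixed). Your remark on strict inequalities is also apt: since $a^{(\ell)}_i=1$ exactly when the preactivation is positive, the region is in general not closed. It is therefore a polyhedron only in the generalized sense, or via its closure, a point the paper glosses over.
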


The proof follows directly from the definition of a convex polyhedron.

We can extend lemma~\ref{firstlayer} to all layers for points inside the same activation region using the fact that the neural network is linear on said activation region:

\begin{lemma}
  \label{alllayers}
  For $\mathcal{N}$ be a neural network and $A \subset \mathbb{R}^{n_{\mathrm{in}}}$ a finite point set such that there is an activation pattern $a$ with non-empty activation region such that $A \subset \mathcal{R}(a, \theta)$, the region $\mathcal{R}_{\mathcal{B}}(A, \theta)$ (see Definition~\ref{def:BB_def_app}) is convex and
  \begin{equation*}
    \mathcal{R}_{\mathcal{B}}(A, \theta) \; \text{is a polytope} \Longleftrightarrow  \rank(W^{(1)})=n_{\mathrm{in}}.
  \end{equation*}
\end{lemma}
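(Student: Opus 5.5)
The plan is to show that the full-network bounding box cannot leave the activation region $\mathcal{R}(a,\theta)$. Inside that region the network is affine, so everything reduces to the first-layer argument of Lemma~\ref{firstlayer}. I read the activation bit $a^{(\ell)}_i$ as recording the sign of the preactivation $\phi^{(\ell)}_{\theta,i}$: it is $1$ if the preactivation is $>0$ and $0$ if it is $\le 0$. I also assume $A\neq\emptyset$, so that the region is nonempty.

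\textbf{Step 1 (sign preservation).} All points of $A$ share the pattern $a$. So for each neuron $(\ell,i)$ the values $\phi^{(\ell)}_{\theta,i}(x)$, $x\in A$, are either all $>0$ (if $a^{(\ell)}_i=1$) or all $\le 0$ (if $a^{(\ell)}_i=0$). Hence the interval $[L^{(\ell)}_i,U^{(\ell)}_i]$ lies in $(0,\infty)$ or in $(-\infty,0]$ respectively. Any $x\in\mathcal{R}_{\mathcal{B}}(A,\theta)$ then has every preactivation of the same sign as on $A$, so $\Phi_\theta(x)=a$. This gives
\[
\mathcal{R}_{\mathcal{B}}(A,\theta)\subset\mathcal{R}(a,\theta).
\]

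\textbf{Step 2 (polyhedral description).} Let $D_a^{(\ell)}=\mathrm{diag}(a^{(\ell)})$ and define affine maps recursively by $\phi^{(1)}_a(x)=W^{(1)}x+b^{(1)}$ and $\phi^{(\ell)}_a(x)=W^{(\ell)}D_a^{(\ell-1)}\phi^{(\ell-1)}_a(x)+b^{(\ell)}$. I claim
\[
\mathcal{R}_{\mathcal{B}}(A,\theta)=\{x:\ L^{(\ell)}_i\le \phi^{(\ell)}_{a,i}(x)\le U^{(\ell)}_i\ \ \forall \ell,i\}.
\]
For "$\subset$": on $\mathcal{R}(a,\theta)$ we have $\phi_\theta=\phi_a$, so Step 1 gives the inclusion.

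For "$\supset$": I argue by induction on $\ell$. Suppose $x$ satisfies the affine bounds and $\phi^{(k)}_\theta(x)=\phi^{(k)}_a(x)$ for all $k<\ell$. By the sign argument of Step 1, the ReLU then acts on layer $\ell-1$ exactly as $D_a^{(\ell-1)}$. This yields $\phi^{(\ell)}_\theta(x)=\phi^{(\ell)}_a(x)$ and closes the induction.

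The right-hand side is a finite intersection of closed half-spaces. So $\mathcal{R}_{\mathcal{B}}(A,\theta)$ is a convex polyhedron, and it is a polytope iff it is bounded.

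\textbf{Step 3 (boundedness).} For ($\Leftarrow$): clearly $\mathcal{R}_{\mathcal{B}}(A,\theta)\subset\mathcal{R}^{(1)}_{\mathcal{B}}(A,\theta)$, since it imposes more constraints. If $\rank(W^{(1)})=n_{\mathrm{in}}$, the latter set is bounded by Lemma~\ref{firstlayer}, so the former is bounded too.

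For ($\Rightarrow$): if $\rank(W^{(1)})<n_{\mathrm{in}}$, pick $0\neq v\in\ker W^{(1)}$ and any $x_0\in A\subset\mathcal{R}_{\mathcal{B}}(A,\theta)$. Then $\phi^{(1)}_\theta(x_0+tv)=\phi^{(1)}_\theta(x_0)$, so $z^{(1)}$ is unchanged. Every later layer depends on $x$ only through $z^{(1)}$, so $\phi_\theta(x_0+tv)=\phi_\theta(x_0)\in\mathcal{B}(A,\theta)$ for all $t\in\mathbb{R}$. The whole line $x_0+\mathbb{R}v$ therefore lies in the region, which is unbounded.

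I expect the main obstacle to be Step 1 together with the induction in Step 2. The ReLU convention at zero must be handled carefully: inactive neurons give closed intervals in $(-\infty,0]$, and this is exactly what keeps the box inside a single pattern. One must also ensure the affine description is exact, and not merely an outer approximation obtained by intersecting with $\mathcal{R}(a,\theta)$.
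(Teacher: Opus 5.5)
Your proof is correct and takes the approach the paper only sketches in one sentence before the lemma: the network is affine on $\mathcal{R}(a,\theta)$, so the box constraints become finitely many linear inequalities, and boundedness reduces to the first-layer argument of Lemma~\ref{firstlayer}. The paper gives no further proof, and your Step~1 supplies the step that sketch leaves implicit: the sign constraints force $\mathcal{R}_{\mathcal{B}}(A,\theta)\subset\mathcal{R}(a,\theta)$, without which one would only know that $\mathcal{R}_{\mathcal{B}}(A,\theta)\cap\mathcal{R}(a,\theta)$ is a polyhedron.
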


Consequently, a region enclosed by a bounding box, defined over a finite set of points and a neural network where the weight matrix of the first layer has full rank relative to the input space, forms a union of bounded polytopes. These polytopes correspond to activation regions determined by the possible activation patterns of points within the bounding box. For example, if the set of points exhibits only two adjacent activation patterns $a$, meaning they differ at most in one bit, the region enclosed by the bounding box will consist of exactly two bounded polytopes. Conversely, if there are three adjacent activation patterns, meaning they differ at most in two bits, the region will comprise at most four bounded polytopes, depending on whether the fourth possible activation pattern has a nonempty activation region.

\paragraph{Formally}
Let \(d\in\mathbb{N}^*\) the number of neurons in a neural network and let a bounding box
\[
  \mathcal{B}(A, \theta) \;=\; \bigl\{\,x\in\mathbb{R}^{d}\ \bigl|\
  \ell_i \,\le x_i \le\, u_i\ \text{for all } i=1,\dots,d\bigr\},
\]
be given, with lower and upper bounds \(l=(l_1,\dots,l_d)\) and
\(u=(u_1,\dots,u_d)\) satisfying \(l_i\le u_i\).
Define
\[
  P \;:=\; \bigl\{\,i\in\{1,\dots,d\}\,\bigl|\,
  l_i < 0 < u_i \bigr\},
  \qquad n_b \;:=\; |P|,
\]
so \(n_b\) counts the coordinates in which the box spans both negative
and positive values (i.e.\ “crosses the origin’’).

For every subset \(S\subseteq P\) let
\[
  B_S \;:=\;
  \bigl\{\,x\in \mathcal{B}(A, \theta) \bigl|\,
  x_i \ge 0 \text{ if } i\in S,\ \,
  x_i \le 0 \text{ if } i\in P\setminus S
  \bigr\}.
\]
Then each \(B_S\) lies entirely inside a single orthant (all coordinates
have a fixed sign pattern), and
\[
  \mathcal{B}(A, \theta)
  \;=\;
  \bigcup_{S\subseteq P} B_S,
  \qquad
  \text{with at most } 2^{n_b} \text{ (non-empty) boxes } B_S.
\]
Consequently:
\begin{multline*}
  \left[
    \mathcal{R}_{\mathcal{B}}(A, \theta)
    \;=\;
    \bigcup_{S\subseteq P} \mathcal{R}_{\mathcal{B_S}}
    \quad
    \text{with at most } 2^{n_b} \text{ (non-empty) polytopes } \mathcal{R}_{\mathcal{B_S}}
    \right]\\
  \;\Longleftrightarrow\;
  \left[ \operatorname{rank}\!\bigl(W^{(1)}\bigr)=n_{\mathrm{in}} \right].
\end{multline*}

Here, \(n_b\) denotes the number of possible activation patterns, not the number of patterns actually grouped into the cluster.

\subsection{Activation patterns and clustering}
\label{sec:act_patterns_clust}

Clustering should be understood through the geometry induced by the ReLU network. As shown in \ref{sec:act_patterns}, the network partitions the input space into activation regions on which the network is affine. When we later enclose a cluster in a bounding box, the box typically intersects several activation regions. In particular, if the box spans both sides of enough activation boundaries (captured by \(n_b\) in the previous section), then the box may contain up to \( 2^{n_b} \) (non-empty) polytopes. This leads to a simple principle that connects clustering to activation patterns without committing to any particular clustering features or metric: clustering procedures that produce approximately spherical (isotropic) clusters in activation space are the ones that minimize the volume covered by the abstraction.
If a cluster in activation space is elongated, tilted, or has “tentacles”, then covering it with per-coordinate minima and maxima introduces some amount of empty volume inside the box.

A convenient way to operationalize this principle is to cluster using the network activation patterns. Activation patterns provide a discrete, geometry-aware description of how the network processes inputs: two inputs are considered close if they trigger almost the same set of active ReLU units. In that setting, spherical clusters correspond to groups that lie in a small ball around a typical pattern, rather than being spread across many unrelated patterns. A natural metric for enforcing such ball-like structure is the Hamming distance.

\begin{definition}
  (Hamming distance)
  Let $x$ and $y$ be two binary vectors in $\{0,1\}^n$. The Hamming distance $d(x,y)$ between $x$ and $y$ is the number of elements (bits) that differ between both vectors:
  \begin{equation}\label{eq:hamming_dist}
    d(x,y) = \mathrm{Card}(\{i \in \llbracket 1, \ldots, n \rrbracket \mid x_i \neq y_i\}).
  \end{equation}
\end{definition}

We highlight that the Hamming distance can be used because activation patterns are binary vectors. This method aligns naturally with the geometric insights discussed previously, as the combinatorial nature of activation patterns within a bounding box is directly related to its coverage.

\paragraph{Activation patterns and network capacity} Activation patterns significantly reflect the capacity and expressivity of neural networks. In particular, the number of linear regions induced by networks has become an established metric for assessing expressivity, as networks with more regions can approximate increasingly complex functions. Although counting activation regions exactly is NP-hard in general~\cite{ijcai2022p492}, various theoretical methods provide bounds or estimations~\cite{montfar2014numberlinearregionsdeep, SerraThiag}. These analyses typically use combinatorial arguments involving hyperplane arrangements defined by \(\relu\) activations, where each neuron, through its \(\relu\) activation boundary, partitions the input space, with deeper networks exponentially multiplying these partitions via hierarchical layer stacking, effectively creating repeated subdivisions. \cite{telgarsky2015representationbenefitsdeepfeedforward} construction of a \(2^L\)-sawtooth function using a \(\relu\) network with \(O(L)\) neurons exemplifies this exponential growth.
Nevertheless, maximum theoretical capacity differs significantly from practical network usage. \citet{raghu17a} confirm this by measuring network expressivity by tracking the trajectory complexity, defined as the number of activation region boundary crossings along random input lines, during training. Despite initial increases, they found that the number of region crossings tends to plateau or grow sub-exponentially as training proceeds. In other words, the network does create more linear pieces to fit the data, though not in the combinatorially explosive way one might expect if it were using all available neurons independently. Furthermore, \citet{pmlr-v97-hanin19a, Hanin} demonstrate that typical neural networks, both at random initialization and following conventional training, do not attain worst-case activation complexity. Instead, their observed activation patterns are substantially simpler than those worst-case constructions. Specifically, they observe that the number of linear regions grows approximately linearly with the total number of neurons rather than exponentially, which explains, in part, why over-parameterized networks can generalize well. Also, they show that although the network function space is vast, stochastic gradient descent may only explore a small, structured subset biased toward “simpler” functions (in terms of activation regions). For instance, slight Gaussian perturbations to parameters in \cite{telgarsky2015representationbenefitsdeepfeedforward} sawtooth network drastically simplify its activation partitions (many teeth disappear). Thus, gradient descent implicitly favors simpler, broader partitions covering data groups rather than complex, arbitrary tilings.
These results show that neural networks possess immense combinatorial expressivity through activation patterns. However, training regimes and implicit regularization significantly restrict this capacity in practice as neural networks tend toward simplified, structured activation partitions rather than utilizing their full combinatorial complexity. Consequently, clustering samples will not attain the full \( 2^{n_b} \) (non-empty) polytopes.

\paragraph{Activation patterns and data structure} Activation patterns tend to align with the intrinsic structure of data. \cite{novak2018sensitivitygeneralizationneuralnetworks} analyze neural networks on image classification tasks and introduce sensitivity measures related to the geometry of activation regions. Their findings indicate that networks exhibiting better generalization capabilities possess larger, smoother activation regions around data points, enhancing robustness against input perturbations within the data manifold. Conversely, networks trained on randomly labeled data, which enforce memorization, produce highly irregular decision boundaries with numerous small activation regions around individual data points. This results in significant sensitivity, where small perturbations easily shift the input across region boundaries and alter outputs.
Furthermore, activation patterns contain the task-relevant invariances encoded by neural networks. \(\relu\) networks encode invariances through a mechanism known as space folding, in which spatially distant inputs map to similar activation patterns. \cite{lewandowski2025spacefoldsreluneural} characterize this phenomenon by analyzing linear input paths and corresponding activation trajectories represented within a Hamming cube. Ideally, a linear path in input space would correspond to a convex path in activation space. However, the folding phenomenon disrupts this linearity, yielding non-convex trajectories. Notably, \cite{lewandowski2025spacefoldsreluneural} observe cases where the Hamming distance to the initial input point decreased despite increases in Euclidean distance in input space. They define a measure based on range metrics reflecting deviations in Hamming distance from linear interpolation to quantify space folding. Their analysis of synthetic benchmarks (e.g., CantorNet) and real datasets (e.g., MNIST) demonstrate that folding increases with network depth and is more pronounced in networks with better generalization. Thus, networks with higher generalization performance exhibit greater degrees of space folding, indicating that learned invariances manifest structurally as folds. \cite{lewandowski2025spacefoldsreluneural} conclude that network generalization capabilities are closely tied to their capacity for space folding and invariance learning.

We add that the Hamming distance between activation patterns also corresponds to the difference in linear transformation applied to the concerned activation regions. Notably, a small Hamming distance between the activation patterns of two inputs implies subtle, low-dimensional differences in the learned representation, even if the original inputs are not close in the input space. Accordingly, clustering the training data using the Hamming distance between activation patterns effectively uses the network-learned representation of the input space.

Especially, we can state the following lemma:

\begin{lemma}\label{prop:Hamming}
  A Hamming distance of one between two activation patterns implies a rank-one difference in the linear transformation matrix that corresponds to these activation patterns.
\end{lemma}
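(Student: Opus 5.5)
The plan is to write the network explicitly as an affine map on each activation region and to observe that flipping one bit changes exactly one diagonal $0/1$ factor in a matrix product. For a pattern $a\in\mathcal{A}$, let $D^{(\ell)}_a=\mathrm{diag}(a^{(\ell)}_1,\dots,a^{(\ell)}_{n^{(\ell)}})$. On $\mathcal{R}(a,\theta)$ we have $\sigma(W^{(\ell)}z^{(\ell-1)}+b^{(\ell)})=D^{(\ell)}_a(W^{(\ell)}z^{(\ell-1)}+b^{(\ell)})$. Unrolling the recursion defining $\mathcal{N}_k$ (with identity output activation, as in the definition of activation patterns) then shows that every preactivation of layer $m$ is affine in $x$:
\[
\phi^{(m)}_\theta(x)=M^{(m)}_a x+c^{(m)}_a,\qquad
M^{(m)}_a=W^{(m)}D^{(m-1)}_aW^{(m-1)}\cdots D^{(1)}_aW^{(1)} .
\]
This is the ``linear transformation matrix corresponding to $a$'' in the statement. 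The output map has the same form, with one more factor.

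Next, let $a$ and $b$ be two patterns with Hamming distance one, as in Equation~\eqref{eq:hamming_dist}. They differ only at a single neuron $i$ of some layer $p$. Then $D^{(\ell)}_a=D^{(\ell)}_b$ for all $\ell\neq p$, and $D^{(p)}_a-D^{(p)}_b=\pm e_ie_i^\top$. For any layer $m>p$, I factor the product around layer $p$ as $M^{(m)}_a=P\,D^{(p)}_a\,Q$ and $M^{(m)}_b=P\,D^{(p)}_b\,Q$, where
\[
P=W^{(m)}D^{(m-1)}_a\cdots W^{(p+1)},\qquad Q=W^{(p)}D^{(p-1)}_a\cdots W^{(1)} .
\]
Because all other diagonal factors coincide, $P$ and $Q$ are the same for $a$ and $b$. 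Hence
\[
M^{(m)}_a-M^{(m)}_b=\pm (Pe_i)(e_i^\top Q),
\]
which is an outer product of two vectors and therefore has rank at most one. For layers $m\le p$ the matrices coincide. The same computation applied to the full network, with the output factor $W^{(k+1)}$ appended, gives the statement for the end-to-end linear map.

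The main obstacle is the word ``rank-one'' rather than ``rank at most one''. The difference vanishes when $Pe_i=0$ (neuron $i$'s downstream influence is killed by later inactive units or zero weights) or when $e_i^\top Q=0$ (the neuron's effective input row is zero). I would handle this by stating the result as $\rank\le 1$. I would add that equality holds exactly when both $Pe_i\neq 0$ and $e_i^\top Q\neq 0$, which is the generic situation. In particular, $e_i^\top Q\neq 0$ holds when $W^{(1)}$ has full rank and the intermediate active rows are nondegenerate, matching the full-rank hypothesis used in Lemma~\ref{firstlayer} and Lemma~\ref{alllayers}.

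Finally, I would note that the bias offsets $c^{(m)}_a$ also differ by a vector proportional to $Pe_i$. So the two affine maps differ by a rank-$\le 1$ affine correction whose image lies in the span of $Pe_i$, which supports the interpretation given before the lemma.
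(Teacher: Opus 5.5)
Your argument is the paper's argument: on a fixed pattern each ReLU becomes a diagonal $0/1$ factor, and you factor the matrix product around the flipped layer. The difference is then $P\bigl(D^{(p)}_a-D^{(p)}_b\bigr)Q=\pm(Pe_i)(e_i^\top Q)$, an outer product, which matches the paper's $A_1(D'^{(i)}-D^{(i)})W^{(i)}A_2$. Your restatement as rank at most one, with equality iff $Pe_i\neq 0$ and $e_i^\top Q\neq 0$, correctly sharpens the paper's looser ``rank-1 effect''. One side remark needs correcting: full rank of $W^{(1)}$ does not by itself give $e_i^\top Q\neq 0$, because $W^{(1)}$ has a nontrivial left kernel whenever $n^{(1)}>n_{\mathrm{in}}$.
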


\begin{proof}
  Let us fix an activation pattern $\alpha$, see Definition~\ref{eq:activ-pattern}. Then the network's mapping from input $x$ to output $y$ can be written as a single affine linear transformation:
  \[ y = Ax+b
  \]
  where $A$ and $b$ depend on the pattern  $\alpha$ of which neurons are active.
  For a network with $L$ layers, each layer $\ell$ transforms its input $h^{(\ell -1)}$ into a pre-activation
  \[
    z^{(\ell )} = W^{(\ell )} h^{(\ell -1)} + b^{(\ell )}.
  \]

  Once an activation pattern  $\alpha$ is fixed, the \(\relu\) operation for each layer reduces to a linear selection. This can be represented by a diagonal matrix $D^{(\ell )}$ whose diagonal entries are either 1 (active neuron) or 0 (inactive neuron). Thus, on a given region:
  \[
    h^{(\ell )} := \relu(z^{(\ell )}) = D^{(\ell )} \left(W^{(\ell )} h^{(\ell -1)} + b^{(\ell )}\right),
  \]
  where $D^{(\ell )}$ is determined by the sign pattern of $z^{(\ell )}$.

  Chaining these together from the input to the output, the linear transformation matrix is:
  \[
    A = \prod_{k=0}^{L-1}D^{(L-k)} W^{(L-k)}.
  \]

  Now, consider that we move to a region of the input space where the activation pattern differs at \textit{exactly one neuron}. That means for some layer $i$ and some neuron $j$ in that layer, the corresponding diagonal entry in $D^{(i)}$ flips from 1 to 0, or from 0 to 1, and we denote by $D'^{(i)}$ the corresponding diagonal matrix. The linear transformation matrix $A$ becomes:

  \[
    A' = A_1 D'^{(i)} W^{(i)} A_2,
  \]

  where:
  \[
    A_1=\prod_{k=0}^{L-i+1}D^{(L-k)} W^{(L-k)}, \text{ and }
    A_2=\prod_{k=L-i-1}^{L-1}D^{(L-k)} W^{(L-k)}.
  \]

  Let us say that originally $D^{(i)}$ had a 0 at position $j$, and now $D'^{(i)}$ has a 1 there. This means previously the $j$-th neuron's output of layer $i$ was set to zero, and now it is allowed through. Algebraically, $D'^{(i)} - D^{(i)}$ is the diagonal matrix with a one in the $(j, j)$ position and 0 elsewhere.
  Substituting back into the full product yields:
  \[
    A' - A = A_1 \left( D'^{(i)} - D^{(i)} \right) W^{(i)} A_2.
  \]

  Since $D'^{(i)} - D^{(i)}$ is zero everywhere except one diagonal entry, this difference has a rank-1 effect on the overall matrix $A$. In other words, adding or removing one active neuron at layer $i$ corresponds to adding or removing a single direction of linear influence.
\end{proof}

% \section{Additional t-SNE Visualizations}
% \label{sec:tsne-appendix}

% This appendix presents additional t-SNE embeddings for alternative feature representations extracted from a ResNet trained on CIFAR-10.
% Specifically, Figure~\ref{fig:tsne-appendix-all} visualize per-channel minimum activations, per-channel activation fractions, and penultimate-layer features.
% Across all representations, the embeddings exhibit consistent class-wise clustering with low intra-class variance and clear inter-class separation, indicating that the observed class-dependent structure is robust to the choice of feature representation.

\section{Detailed Experiments Results}
\label{sec:detailedres}

This section presents the detailed results of the comparison with the state-of-the-art methods using the OpenOOD benchmark. For CIFAR-10, we report both FarOOD and NearOOD results on ResNet18 in Tables~\ref{tab:cifar10_benchmark} and \ref{tab:cifar10_benchmark/nearOOD}. For CIFAR-100, the corresponding FarOOD and NearOOD results on ResNet18 are provided in Tables~\ref{tab:cifar100_benchmark} and \ref{tab:cifar100_benchmark/nearOOD}. For ImageNet-200, FarOOD and NearOOD results on ResNet18 are reported in Tables~\ref{tab:imagenet200_benchmark} and \ref{tab:imagenet200_benchmark/nearOOD}. Finally, for ImageNet-1k, we present results on multiple architectures: ResNet50 (FarOOD in Table~\ref{tab:imagenet1k_benchmark} and NearOOD in Table~\ref{tab:imagenet1k_benchmark/nearOOD}), ViT-B/16 (FarOOD in Table~\ref{tab:imagenet1k_benchmark/ViT} and NearOOD in Table~\ref{tab:imagenet1k_benchmark/ViT/nearOOD}), and Swin-T (FarOOD in Table~\ref{tab:imagenet1k_benchmark/swin} and NearOOD in Table~\ref{tab:imagenet1k_benchmark/swin/nearOOD}). The reported results come from the associated papers and from the OpenOOD leaderboard; when a metric is not available in the corresponding reference, we denote it with the symbol ``$-$'' in the tables. Across all tables, the best performance is highlighted in \textbf{bold}, while the second best is \underline{underlined}.

We provide here the list of the compared state-of-the-art methods and their associated paper: OpenMax \cite{DBLP:journals/corr/BendaleB15}, MSP \cite{HendrycksG17}, TempScale \cite{GuoPSW17}, ODIN \cite{liang2018enhancing}, MDS and MDSEns \cite{lee2018simple}, RMDS \cite{DBLP:journals/corr/abs-2106-09022}, Gram \cite{pmlr-v119-sastry20a}, EBO \cite{liu2020energy}, OpenGAN \cite{Kong_2021_ICCV}, GradNorm \cite{NEURIPS2021_063e26c6}, ReAct \cite{NEURIPS2021_01894d6f}, MLS and KLM \cite{pmlr-v162-hendrycks22a}, VIM \cite{Wang_2022_CVPR}, KNN \cite{sun2022nearest}, DICE \cite{sun2022dice}, RankFeat \cite{song2022rankfeat}, ASH \cite{djurisic2023extremely}, SHE \cite{zhang2023outofdistribution}, GEN \cite{Liu2023GEN}, SCALE \cite{xu2024scaling}.

\subsection{Full CIFAR-10 results:}

\FloatBarrier
\begin{table*}[!h]
  \centering
  \caption{CIFAR-10 FarOOD Benchmark results on the ResNet18 architecture.}
  \label{tab:cifar10_benchmark}
  \resizebox{\textwidth}{!}{%
    % [inline block 0: 12 envs, 114600 chars in 12 pieces, piece 1 here, a bare % at each other -> data_tex | \begin{tabular}{lcccccccccc}       \toprule...]

  }
\end{table*}

\begin{table}[ht]
  \centering
  \caption{CIFAR-10 NearOOD Benchmark results on the ResNet18 architecture}
  \label{tab:cifar10_benchmark/nearOOD}
  \resizebox{\textwidth}{!}{%
    %
  }
\end{table}

\FloatBarrier

\newpage

\subsection{Full CIFAR-100 results:}

\FloatBarrier

\begin{table}[!h]
  \centering
  \caption{CIFAR-100 FarOOD Benchmark results on the ResNet18 architecture}
  \label{tab:cifar100_benchmark}
  \resizebox{\textwidth}{!}{%
    %
  }
\end{table}

\begin{table}[ht]
  \centering
  \caption{CIFAR-100 NearOOD Benchmark results on the ResNet18 architecture}
  \label{tab:cifar100_benchmark/nearOOD}
  \resizebox{\textwidth}{!}{%
    %
  }
\end{table}

\FloatBarrier

\newpage

\subsection{Full ImageNet-200 results:}

\FloatBarrier

\begin{table}[ht]
  \centering
  \caption{ImageNet-200 FarOOD Benchmark results on the ResNet18 architecture}
  \label{tab:imagenet200_benchmark}
  \resizebox{\textwidth}{!}{%
    %
  }
\end{table}

\begin{table}[ht]
  \centering
  \caption{ImageNet-200 NearOOD Benchmark results on the ResNet18 architecture}
  \label{tab:imagenet200_benchmark/nearOOD}
  \resizebox{\textwidth}{!}{%
    %
  }
\end{table}

\FloatBarrier

\clearpage

\subsection{Full ImageNet-1k results:}

\FloatBarrier

\begin{table}[ht]
  \centering
  \caption{ImageNet-1k FarOOD Benchmark results on the ResNet50 architecture.}
  \label{tab:imagenet1k_benchmark}
  \resizebox{\textwidth}{!}{%
    %
  }
\end{table}

\begin{table}[ht]
  \centering
  \caption{ImageNet-1k NearOOD Benchmark results on the ResNet50 architecture.}
  \label{tab:imagenet1k_benchmark/nearOOD}
  \resizebox{\textwidth}{!}{%
    %
  }
\end{table}

\begin{table}[ht]
  \centering
  \caption{ImageNet-1k FarOOD benchmark results on the ViT-B/16 architecture.}
  \label{tab:imagenet1k_benchmark/ViT}
  \resizebox{\textwidth}{!}{%
    %
  }
\end{table}

\begin{table}[ht]
  \centering
  \caption{ImageNet-1k NearOOD benchmark results on the ViT-B/16 architecture.}
  \label{tab:imagenet1k_benchmark/ViT/nearOOD}
  \resizebox{\textwidth}{!}{%
    %
  }
\end{table}

\begin{table}[ht]
  \centering
  \caption{ImageNet-1k FarOOD benchmark results on the Swin-T architecture.}
  \label{tab:imagenet1k_benchmark/swin}
  \resizebox{\textwidth}{!}{%
    %
  }
\end{table}

\begin{table}[ht]
  \centering
  \caption{ImageNet-1k NearOOD benchmark results on the Swin-T architecture.}
  \label{tab:imagenet1k_benchmark/swin/nearOOD}
  \resizebox{\textwidth}{!}{%
    %
  }
\end{table}

\FloatBarrier

\end{document}